\let\emph\textit
\DeclarePairedDelimiterX{\infdivx}[2]{(}{)}{%
    #1\;\delimsize\|\;#2%
}
\theoremstyle{plain}
\newtheorem{theorem}{Theorem}[section]
\newtheorem{proposition}[theorem]{Proposition}
\newtheorem{lemma}[theorem]{Lemma}
\newtheorem{corollary}[theorem]{Corollary}
\theoremstyle{definition}
\newtheorem{definition}[theorem]{Definition}
\theoremstyle{remark}
\newtheorem{remark}[theorem]{Remark}
\newcommand{\const}[1]{\mathrm{#1}}
\def\R{{\mathbb{R}}}
\def\1{\textbf{1}}
\def\econst{\const{e}}
\newcommand{\onevct}{\bm{1}}
\newtheorem{problem}{Problem}
\newcommand{\Prob}[2][]{\mathbb{P}_{#1}\left\{ {#2} \right\}}
\newcommand{\Expect}[2][]{\mathbb{E}_{#1}\left[ #2 \right]}
\newcommand{\abs}[1]{\left\vert {#1} \right\vert}
\newcommand{\norm}[1]{\left\Vert {#1} \right\Vert}
\newcommand{\normf}[1]{{\norm{#1}}_\text{F}}
\newcommand{\norminf}[1]{{\norm{#1}}_{\infty}}
\newcommand{\vect}[1]{\operatorname*{vec}\left({#1}\right)}
\newcommand{\st}{\operatorname*{subject\; to}}
\newcommand{\maximize}{\operatorname*{maximize}}
\newcommand{\minimize}{\operatorname*{minimize}}
\newcommand{\lmax}{\operatorname{\lambda_{\max}}}
\newcommand{\lmin}{\operatorname{\lambda_{\min}}}
\newcommand{\lsec}{\lambda_{2}}
\newcommand{\inprod}[2][]{\left\langle {#1},{#2} \right\rangle}
\newcommand{\om}{{\otimes m}}
\newcommand{\yast}{{y^{\ast}}}
\newcommand{\Vcal}{\mathcal{V}}
\newcommand{\Ecal}{\mathcal{E}}
\newcommand{\Hcal}{\mathcal{H}}
\newcommand{\Gcal}{\mathcal{G}}
\newcommand{\Ical}{\mathcal{I}}
\newcommand{\cpsd}{\mathcal{S}_{+}^{n,m}}
\newcommand{\vpsd}{\mathcal{S}_{+}^{\ast,n,m}}
\newcommand{\sgmm}{{\sigma_2^{n,m}}}
\newcommand{\sgmc}{{\bar{\sigma}_2^{n,m}}}
\newcommand{\onefun}{\mathbbm{1}}
\def\algbackskip{\hskip-\ALG@thistlm}
\title{Exact Inference in High-order Structured Prediction}
\author{
  \textbf{Chuyang Ke}\\Department of Computer Science\\Purdue University\\\texttt{cke@purdue.edu}
  \and 
  \textbf{Jean Honorio}\\Department of Computer Science\\Purdue University\\\texttt{jhonorio@purdue.edu}
}
\date{}
\begin{document}
\maketitle

\begin{abstract}
In this paper, we study the problem of inference in high-order structured prediction tasks. 
In the context of Markov random fields, the goal of a high-order inference task is to maximize a score function on the space of labels, and the score function can be decomposed into sum of unary and high-order potentials. 
We apply a generative model approach to study the problem of high-order inference, and provide a two-stage convex optimization algorithm for exact label recovery. 
We also provide a new class of hypergraph structural properties related to hyperedge expansion that drives the success in general high-order inference problems.
Finally, we connect the performance of our algorithm and the hyperedge expansion property using a novel hypergraph Cheeger-type inequality. 
\end{abstract}

%%%%%%%%%%%%%%%%%%%%%%%%%%%%%%%%%%%%%%%%%%%%%%%%%%%%%%%%%%%%
\allowdisplaybreaks

\section{Introduction}

\emph{Structured prediction} has been widely used in various machine learning fields in the past $20$ years, including applications like social network analysis, computer vision, molecular biology, natural language processing (NLP), among others. 
A common objective in these tasks is assigning / recovering labels, that is, given some possibly noisy observation, the goal is to output a group label for each entity in the task. 
In social network analysis, this could be detecting communities based on user profiles and preferences \citep{kelley2012defining}.
In computer vision, researchers want the AI to decide whether a pixel is in the foreground or background \citep{nowozin2011structured}. 
In biology, it is sometimes desirable to cluster molecules by structural similarity \citep{nugent2010overview}.
In NLP, part-of-speech tagging is probably one of the most well-known structured prediction task \citep{weiss2010structured}. 

From a methodological point of view, a standard approach in the structured prediction tasks above, is to recover the \emph{global structure} by exploiting many \emph{local structures}. 
Take social networks as an example. A widely used assumption in social network analysis is \emph{affinity} --- users with similar profiles and preferences are more likely to become friends. Intuitively, a structured prediction algorithm tends to assign two users the same label, if they have a higher affinity score. 
Similarly, the same idea can be motivated in the context of Markov random fields (MRFs). Assume all entities form an undirected graph $\Gcal = (\Vcal, \Ecal)$, structured prediction can be viewed as the task of solving the following inference problem \citep{bello2019exact}:
\begin{align}
\maximize_{y \in \mathcal{L}^{\abs{\mathcal{V}}}}
\sum_{v\in \Vcal, l\in\mathcal{L}} c_v(l) \cdot \onefun[y_v = l] 
+ \sum_{\substack{(v_1,v_2)\in \Ecal \\ l_1, l_2\in\mathcal{L}}} c_{v_1,v_2}(l_1,l_2) \cdot \onefun[y_{v_1} = l_1, y_{v_2} = l_2] 
\,,
\label{eq:mrf_graph_opt}
\end{align} 
where $\mathcal{L}$ is the space of labels, $c_v(l)$ is the score of assigning label $l$ to node $v$, and $c_{v_1,v_2}(l_1,l_2)$ is the score of assigning labels $l_1$ and $l_2$ to neighboring nodes $v_1$ and $v_2$. In the MRF and inference literature, the two terms in \eqref{eq:mrf_graph_opt} are often referred to as \emph{unary} and \emph{pairwise potentials}, respectively. 
The inference formulation above allows one to recover the global structure, by finding a configuration that maximizes the summation of unary and pairwise local scores. 

However, entities in many real-world problems could interact beyond the pairwise fashion. Take the social network example again, but this time let us focus on the academia co-authorship network: many published papers are written by more than two authors \citep{liu2005co}. Such high-order interactions cannot be captured by pairwise structures.
Geometrically, the co-authorship network can no longer be represented by a graph. As a result, the introduction of hypergraphs is necessary to model high-order structured prediction problems.

In this paper, we study the problem of high-order structured prediction, in which instead of using pairwise potentials, \emph{high-order potentials} are considered. Using the MRF formulation, we are interested in inference problems of the following form:
\begin{align}
\maximize_{y \in \mathcal{L}^{\abs{\mathcal{V}}}} 
\sum_{v\in \Vcal, l\in\mathcal{L}} c_v(l) \cdot \onefun[y_v = l] 
+ \sum_{\substack{e\in \Ecal \\ l_1, \dots, l_m\in\mathcal{L} \\ e = (v_1,\dots,v_m)}} c_{e}(l_1,\dots,l_m) \cdot \onefun[y_{v_1} = l_1, \dots, y_{v_m} = l_m] 
\,,
\label{eq:mrf_hypergraph_opt}
\end{align} 
where $m$ is the order of the inference problem as well as the hypergraph (each hyperedge connects $m$ vertices), and $c_{e}(l_1,\dots,l_m)$ is the score of assigning labels $l_1$ through $l_m$ to neighboring nodes $v_1$ through $v_m$ connected by hyperedge $e \in \Ecal$. 

% We tackle the high-order inference problem \eqref{eq:mrf_hypergraph_opt} from novel perspectives. Here we discuss 

%%%%%%%%%%%%%%%%%%%%%%

\subsection{Inference as a Recovery Task}

Structured prediction and inference problems with unary and pairwise potentials in the form of \eqref{eq:mrf_graph_opt} have been studied in prior literature. 
\citet{globerson2015hard} introduced the problem of label recovery in the case of two-dimensional grid lattices, and analyzed the conditions for approximate inference.
Along the same line of work, \citet{foster2018inference} generalized the model by allowing tree decompositions. 
Another flavor is the problem of exact inference, for which \citet{bello2019exact} proposed a convex semidefinite programming (SDP) approach. 
In these works, the problem of label recovery is motivated by a generative model, which assumes the existence of a ground truth label vector $\yast$, and generates (possibly noisy) unary and pairwise observations based on label interactions.

Unfortunately, little is known for structured prediction with high-order potentials and hypergraphs. 
In recent years, there have been various attempts to generalize some graph properties (including hypergraph Laplacian, Rayleigh quotient, hyperedge expansion, Cheeger constant, among others) to hypergraphs \citep{li2018submodular, mulas2021cheeger, yoshida2019cheeger,chan2018spectral, chen2017fiedler,chang2020hypergraph}.
However, it took a long time for us to find out that due to the nature of structured prediction tasks, the hypergraph definitions must fulfill certain properties, and none of the definitions in the aforementioned works fulfill those. This makes it challenging to design hypergraph-based label recovery algorithms. Furthermore, none of the aforementioned works provide guarantees of either approximate or exact inference.

In this work, we apply a generative model approach to study the problem of high-order inference. We analyze the task of label recovery, and answer the following question:
% theoretical condition, under which exact inference of the labels can be achieved. 
% The generative approach allows us to study the average case behavior of the high-order model, and enables us to characterize the statistical and topological conditions for exact inference. 
% We try to answer the following questions:
\begin{problem}[Label Recovery]
Is there any algorithm that takes noisy unary and high-order local observation as the input, and correctly recovers the underlying true labels? 
\end{problem}

%%%%%%%%%%%%%%%%%%%%%%

\subsection{Inference and Structural Properties}
In the MRF inference literature, a central and longtime discussion focuses on inference solvability versus certain structural properties of the problem. 
 
\begin{figure*}[ht!]
\centering

\begin{subfigure}[b]{.3\linewidth}
\centering
\includegraphics[width=\linewidth]{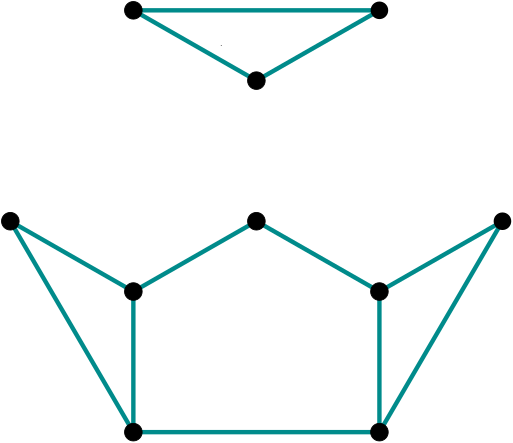}
\caption{Disconnected graph (zero edge expansion)}
\label{fig:link}
\end{subfigure}
\hspace{3mm}
\begin{subfigure}[b]{.3\linewidth}
\includegraphics[width=\linewidth]{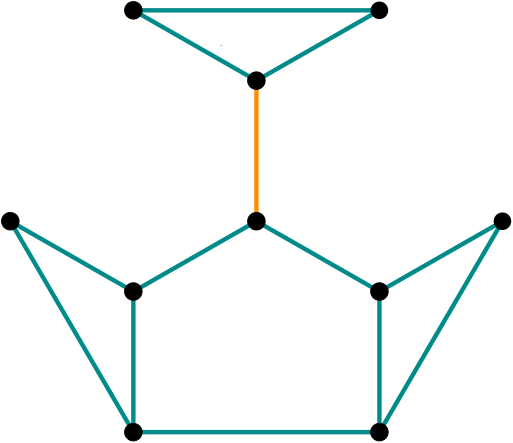}
\caption{Graph with a bottleneck (small edge expansion)}
\label{fig:star}
\end{subfigure}
\hspace{3mm}
\begin{subfigure}[b]{.3\linewidth}
\includegraphics[width=\linewidth]{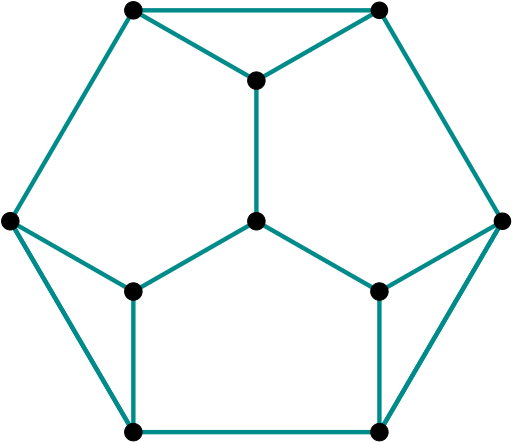}
\caption{Graph without bottlenecks (large edge expansion)}
\label{fig:star2}
\end{subfigure}

% \vspace{-.5mm}
\caption{Graph expansion examples. Figure \ref{fig:link} shows a disconnected graph. With an ideal algorithm we may be able to recover the user communities in the top subgraph and in the bottom subgraph, but since there is no observed interaction between the two, we will not be able to infer the global structure.  
Figure \ref{fig:star} connects the two components in Figure \ref{fig:link} using a single orange edge. In this case the orange edge is the  ``bottleneck.'' Removing the orange edge disconnects the graph.
Figure \ref{fig:star2} adds two more edges to Figure \ref{fig:star}. In this case every component is connected with no weak ``bottleneck.'' 
Edge expansion is a structural property, which characterizes how connected the components in a graph are.}
\label{figs:edge_expansion}
\end{figure*}

To see this, we first revisit various classes of structural properties in pairwise inference problems (i.e., in the form of \eqref{eq:mrf_graph_opt}). 
\citet{chandrasekaran2012complexity} studied treewidth in graphs as a structural property, and showed that graphs with low treewidths are solvable. 
\citet{schraudolph2008efficient} demonstrated that planar graphs can be solved.
\citet{boykov2006graph} analyzed graphs with binary labels and sub-modular pairwise potentials. 
\citet{bello2019exact} showed that inference with graphs that are ``good'' expanders, or ``bad'' expanders plus an Erdos-Renyi random graph, can be achieved.
It is worth highlighting that the structural properties above are not directly comparable or reducible. Instead, they characterize the difficulty of an inference problem from different angles, or in other words, for different classes of graphs.

Similar discussions about inference versus structural properties exist in high-order MRF inference literature.
For example, \citet{komodakis2009beyond} investigated hypergraphs fulfilling the property of one sub-hypergraph per clique, and proved that high-order MRF inference can be achieved through solving linear programs (LPs).
\citet{fix2014hypergraph} studied inference in hypergraphs with the property of local completeness.
\citet{gallagher2011inference} analyzed the performance of high-order MRF inference through order reduction versus the number of non-submodular edges. 
However, these works do not provide theoretical guarantees of either approximate or exact inference.

In this paper, we provide a new class of hypergraph structural properties by analyzing \emph{hyperedge expansion}. In order to get some intuition, let us consider a social network with two disconnected sub-networks. With an ideal algorithm, we may recover the user communities in subnet 1 and the those in subnet 2, but since there is no interaction between the two subnets at all, we will not be able to infer the global community structure (e.g., the relationship between the recovered communities in subnet 1 and subnet 2). 
A less extreme case is networks with ``bottlenecks,'' i.e., removing these bottleneck edges will disconnect the network. For similar reasons, one can imagine that inference in networks with bottlenecks can be hard if noise is present.
See Figure \ref{figs:edge_expansion} for an illustration.
In pairwise graphs ($2$-graphs), such connectivity / bottleneck property can be characterized by the edge expansion (i.e., the Cheeger constant) of the graph.
Characterizing similar expansion properties in high-order hypergraphs poses a challenge, especially if one wants to relate such topological properties to the conditions of exact inference.

\begin{problem}[Structural Property]
Under what topological conditions will our label recovery algorithm work correctly with high probability?
\end{problem}

\textbf{Summary of our contribution.} 
Our work is highly theoretical. We provide a series of novel definitions and results in this paper:
\begin{itemize}
    \item We provide a new class of hypergraph structural properties for high-order inference problems. We derive a novel Cheeger-type inequality, which relates the tensor spectral gap of a hypergraph Laplacian to a Cheeger-type hypergraph expansion property. These hypergraph results are not only limited to the scope of the model in this paper, but also can be helpful to researchers working on high-order inference problems.
    \item We propose a two-stage approach to solve the problem of high-order structured prediction. We formulate the label recovery problem as a high-order combinatorial optimization problem, and further relax it to a novel convex conic form optimization problem. 
    \item We carefully analyze the Karush–Kuhn–Tucker (KKT) conditions of the conic form optimization problem, and derive the sufficient statistical and topological conditions for exact inference. Our KKT analysis guarantees the solution to be optimal with a high probability, as long as the conditions are fulfilled.  
\end{itemize}

%%%%%%%%%%%%%%%%%
\section{Preliminaries}

In this section, we formally define the high-order exact inference problem and introduce the notations that will be used throughout the paper.

We use lowercase font (e.g., $a,b,u,v$) to denote scalars and vectors, and uppercase font (e.g., $A,B,C$) to denote tensors. 
We denote the set of real numbers by $\R$.

For any natural number $n$, we use $[n]$ to denote the set $\{1,\dots, n\}$.

We use $\onevct$ to denote the all-ones vector.

For clarity we use superscripts ${(i)}$ to denote the $i$-th object in a sequence of objects, and subscripts $j$ to denote the $j$-th entry.
We use $\circ$ to denote the Hadamard product, and $\otimes$ to denote the outer product. 
Let $v^{(1)},\dots,v^{(m)}\in \R^n$ be a sequence of $m$ vectors of dimension $n$. Then $v^{(1)} \otimes \ldots \otimes v^{(m)}$ is a tensor of order $m$ and dimension $n$ (or equivalently, of shape $n^\om$), such that 
$
(v^{(1)} \otimes \ldots \otimes v^{(m)})_{i_1, \ldots, i_m} = v^{(1)}_{i_1} \ldots v^{(m)}_{i_m}.
$

%%%%%%%%%%%%

\subsection{Tensor Definitions}

Let $A \in \R^{n^\om}$ be an $m$-th order, $n$-dimensional real tensor. Throughout the paper, we limit our discussion to $m = 2,6,10,14,\dots$ for clarity of exposition. While other even orders ($m=4, 8, 12\dots$) are possible and a similar analysis will follow, the hypergraph definitions will be involving many more terms and the paper will be less readable. See Remark \ref{remark:m_order} for discussion.

A symmetric tensor is invariant under any permutation of the indices. In other words, $A$ is symmetric if for any permutation $\sigma:[m]\to [m]$, we have $A_{\sigma(i_1,\dots,i_m)} = A_{i_1,\dots,i_m}$.

We define the inner product of two tensors $A$, $B$ of the same shape as  $\inprod[A]{B} := \sum_{i_1,\ldots,i_m = 1}^{n} A_{i_1 ,\ldots, i_m} B_{i_1 ,\ldots, i_m}$. We define the tensor Frobenius norm as $\normf{A} := \sqrt{\inprod[A]{A}}$.

A symmetric tensor $A$ is positive semidefinite (PSD), if for all $v\in \R^n$, we have $\inprod[A]{v^\om} \geq 0$. We use $\cpsd$ to denote the convex cone of all $m$-order, $n$-dimensional PSD tensors. 

The dual cone of $\cpsd$ is the Caratheodory tensor cone $\vpsd$, which is defined as $\vpsd := \left\{\sum_{i=1}^{\binom{m+n-1}{m}} v^{(i)\om} \mid v^{(i)} \in \R^n \right\}$. In other words, every tensor in $\vpsd$ is the summation of at most $\binom{m+n-1}{m}$ rank-one tensors. 
$\cpsd$ and $\vpsd$ are dual to each other \citep{ke2022exact}.

For any tensor $A \in \R^{n^\om}$, we define its minimum tensor eigenvalue $\lmin(A)$ (or equivalently $\lambda_1(A)$) using a variational characterization, such that 
$\lmin(A) := \min_{v\in\R^n, \norm{v} = 1} \inprod[A]{v^\om}$.
Similarly we define its second minimum tensor eigenvalue $\lsec(A)$ as
$\lsec(A) := \min_{v\in\R^n, \norm{v} = 1, v\perp v^{(1)}} \inprod[A]{v^\om}$, where $v^{(1)}$ is the eigenvector corresponding to $\lmin(A)$.

We denote $\sgmm$ as the index set of $m$-tuples in the shape of $\sigma(i_1,i_1,i_2,i_2,\dots,i_{m/2}, i_{m/2})$, for any permutation $\sigma:[m] \to [m]$ and $i_j \in [n]$. Intuitively, in every tuple of $\sgmm$, every index repeats an even number of times. We use $\sgmc$ to denote the set $\{(i_1,\dots,i_m) \mid (i_1,\dots,i_m) \notin \sgmm, \text{and at least one index repeats twice}\}$. In other words, $\sgmc$ is the complement of $\sgmm$, subtracting cases with all unique indices.

%%%%%%%%%%%%%

\subsection{High-order Inference Task}
% Here we formulate the high-order inference task. We consider a setting motivated from \citet{globerson2015hard} and \citet{bello2019exact}. The key difference is that we focus on the case of arguably more general hypergraphs.

% \mynote{TODO: discussion of $m$}

We consider the task of predicting a set of $n$ vertex labels $\yast = (y_1^\ast, \dots, y_n^\ast)$, where $y_i^\ast \in \{+1, -1\}$, from a set of observations $X$ and $z$. 
$X$ and $z$ are noisy observations generated from some underlying $m$-uniform hypergraph $\Gcal = (\Vcal, \Ecal)$. 
% \mynote{true labeling yast}
In particular, $\Vcal$ is the set of vertices (nodes) with $\abs{\Vcal} = n$,
% $\Hcal$ is the set of $\frac{m}{2}$-hypervertices, and we let $N := \abs{\Hcal} = \binom{n}{m/2}$. \mynote{CHECK: do not need to introduce H until Cheeger?}
and $\Ecal$ is the set of hyperedges.
% In an $m$-th order hypergraph, there are at most $\binom{N}{2}$ hyperedges (which we refer to as a complete hypergraph). 

% $\Omega \subset \Ecal$ is the set of observed hyperedges.
For every possible $m$-vertex tuple $e = ({i_1},\dots,{i_m})$, if $e \in \Ecal$, the hyperedge observation $X_{{i_1},\dots,{i_m}}$ (and all corresponding symmetric entries $X_{\sigma({i_1},\dots,{i_m})}$) is sampled to be $y_1^\ast \cdots y_m^\ast$ with probability $1-p$, and $- y_1^\ast \cdots y_m^\ast$ with probability $p$ independently. If $e \notin \Ecal$, $X_{{i_1},\dots,{i_m}}$ is set to $0$.

For every node $v_i \in \Vcal$, the node observation $z_i$ is sampled to be $y_i^\ast$ with probability $1-q$, and $- y_i^\ast$ with probability $q$ independently.

We now summarize the generative model.

\begin{definition}[High-order Structured Prediction with Partial Observation]
\textbf{Unknown}: True node labeling vector $\yast = (y_1^\ast,\dots, y_n^\ast)$.
\textbf{Observation}: Partial and noisy hyperedge observation tensor $X \in \{-1,0,+1\}^{n^\om}$. 
Noisy node label observation vector $z \in \{-1,+1\}^n$.
\textbf{Task}: Infer and recover the correct node labeling vector $\yast$ from the observation $X$ and $z$.
\end{definition}
\section{Hypergraph Structural Properties and Cheeger-type Inequality}
In this section, we introduce a series of novel Cheeger-type analysis for hypergraphs. This allows us to characterize the spectral gap of a hypergraph Laplacian, via the topological hyperedge expansion of the graph itself.
Hypergraph theorems in this section are general, and are not limited to the specific model covered in our inference task. To the best of our knowledge, the following high-order definitions and results are novel. 
All missing proofs of the lemmas and theorems can be found in Appendix \ref{appendix:proof_lemma}.

%%%%%%%%%%

\subsection{Hypergraph Topology}

We first introduce the necessary hypergraph topological definitions. 

\begin{definition}[Induced Hypervertices]
Given an $m$-uniform hypergraph $\Gcal = (\Vcal, \Ecal)$, we use 
\[
\Hcal := \{\{i_1,\dots,i_{m/2}\} \mid i_1, \dots, i_{m/2} \in [n]\}
\] 
to denote the set of induced hypervertices. 
We denote its cardinality by $N := \abs{\Hcal} = \binom{n}{m/2}$.
\label{def:hypervertices}
\end{definition}

\begin{definition}[Boundary of a Hypervertex Set]
For any hypervertex set $S \subset \Hcal$, we denote its boundary set as 
\[
\partial S := \{h_1 \cup h_2 \mid h_1 \in S, h_2 \notin S, h_1 \cap h_2 = \emptyset\}  \,.
\]
Note that $\partial S$ is a set of $m$-th order hyperedges and non-edges.
\label{def:boundary_set}
\end{definition}

\begin{definition}[Hyperedge Expansion of a Hypervertex Set]
For any hypervertex set $S \subset \Hcal$, we denote the hyperedge expansion of the set $S$ as 
\[
\phi_S := \frac{\abs{\partial S}}{\abs{S}} \,.
\]
\label{def:set_expansion} 
\end{definition}

\begin{definition}[Hyperedge Expansion of a Hypergraph]
Given an $m$-uniform hypergraph $\Gcal = (\Vcal, \Ecal)$ with induced hypervertices $\Hcal$, we denote the hyperedge expansion of the hypergraph $\Gcal$ as 
\[
\phi_\Gcal := \min_{S\subset \Hcal, 0 < S \leq N/2} \phi_S = \min_{S\subset \Hcal, 0 < S \leq N/2} \frac{\abs{\partial S}}{\abs{S}} \,.
\]
We also call $\phi_\Gcal$ the Cheeger constant of the hypergraph.
\label{def:hypergraph_expansion} 
\end{definition}

%%%%%%%%%%

\subsection{Hypergraph Laplacian}

In this section, we introduce our hypergraph Laplacian related definitions. 

\begin{definition}[$\zeta$-function]
$\zeta: \R^m \to \R$ is a function defined as
\[
\zeta(v_1,\dots,v_m) = \sum_{\Ical \subset [m], \abs{\Ical} = m/2} \left(\sum_{i\in \Ical} v_i - \sum_{j\notin \Ical} v_j\right)^m   
\,.
\]
\label{def:zeta_function}
\end{definition}

\begin{definition}[Hypergraph Laplacian]
Given an $m$-uniform hypergraph $\Gcal = (\Vcal, \Ecal)$, we use $L \in \R^{n^\om}$ to denote its Laplacian tensor, which fulfills 
\[  
\inprod[L]{v^\om} = \frac{1}{m! \binom{m}{m/2}} \sum_{(i_1,\dots,i_m)\in \Ecal} \zeta(v_{i_1},\dots,v_{i_m}) 
\,.
\]
\label{def:laplacian}
\end{definition}

\begin{definition}[Rayleigh Quotient]
For any hypergraph Laplacian $L$ and non-zero vector $v \in \R^n$, the Rayleigh quotient $R_L(v)$ is defined as 
\[
R_L(v) := \frac{\inprod[L]{v^\om}}{\norm{v}^m} \,.   
\]
\label{def:rayleigh}
\end{definition}

\begin{definition}[Signed Laplacian Tensor]
Given an $m$-uniform hypergraph $\Gcal = (\Vcal, \Ecal)$ with a sign vector $y \in \{-1,+1\}^n$, we use $L_y \in \R^{n^\om}$ to denote its signed Laplacian tensor, which fulfills
\[  
\inprod[L_y]{v^\om} = \frac{1}{m! \binom{m}{m/2}} \sum_{(i_1,\dots,i_m)\in \Ecal} \zeta(y_{i_1} v_{i_1},\dots, y_{i_m} v_{i_m}) 
\,.
\]
\label{def:signed_laplacian}
\end{definition}

\begin{remark}
The hypergraph Laplacian tensor $L$ can be viewed as a signed Laplacian tensor $L_y$, by taking $y = \onevct$. 
\end{remark}

%%%%%%%%%%%%%%%%%%%%%%%%%%%%%%

Here we provide some important properties of hypergraph Laplacians and Rayleigh quotients.

\begin{lemma}[Laplacian Eigenpair]
For any hypergraph Laplacian $L$, $\onevct$ is an eigenvector of $L$ with a minimum eigenvalue of $0$. Similarly, for any signed hypergraph Laplacian $L_y$, $y$ is an eigenvector of $L_y$ with an eigenvalue of $0$.
\label{lemma:laplacian_eigenpair}
\end{lemma}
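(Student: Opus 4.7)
The proof plan rests on two elementary facts about the $\zeta$-function: it is nonnegative on real inputs, and it vanishes at the all-ones tuple. Both facts follow immediately from the restriction $m \in \{2,6,10,\dots\}$ (in particular $m$ even). The first fact is just that $(\cdot)^m$ is nonnegative for even $m$, so every summand $(\sum_{i\in\Ical} v_i - \sum_{j\notin\Ical} v_j)^m$ in $\zeta(v_1,\dots,v_m)$ is nonnegative. The second fact comes from observing that if $v_1=\dots=v_m=1$, then for every index subset $\Ical \subset [m]$ with $\abs{\Ical} = m/2$, the inner sum evaluates to $m/2 - m/2 = 0$, hence $\zeta(1,\dots,1) = 0$.

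From there the Laplacian case is almost immediate. Using Definition~\ref{def:laplacian}, plugging $v = \onevct$ turns every $\zeta(v_{i_1},\dots,v_{i_m})$ in the edge sum into $\zeta(1,\dots,1) = 0$, so $\inprod[L]{\onevct^\om} = 0$. On the other hand, nonnegativity of $\zeta$ gives $\inprod[L]{v^\om} \geq 0$ for every $v \in \R^n$, since the edge sum is a sum of nonnegative terms and the prefactor $1/(m!\binom{m}{m/2})$ is positive. Combining these with the variational definition $\lmin(L) = \min_{\norm{v}=1} \inprod[L]{v^\om}$ and noting that $\inprod[L]{v^\om}$ is homogeneous of degree $m$ in $v$, I conclude $\lmin(L) = 0$ and that $\onevct/\sqrt{n}$ (equivalently $\onevct$ up to normalization) is a minimizer, i.e., an eigenvector for the minimum eigenvalue.

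The signed Laplacian case reduces to the unsigned one by the change of variables $w_i := y_i v_i$, $i \in [n]$. Under this substitution, Definition~\ref{def:signed_laplacian} reads
\[
\inprod[L_y]{v^\om} \;=\; \frac{1}{m!\binom{m}{m/2}} \sum_{(i_1,\dots,i_m)\in \Ecal} \zeta(w_{i_1},\dots,w_{i_m}),
\]
which is exactly the Laplacian quadratic form evaluated at $w$. Since $y_i \in \{-1,+1\}$, choosing $v = y$ gives $w_i = y_i^2 = 1$ for every $i$, so the first argument above shows $\inprod[L_y]{y^\om} = 0$. The same nonnegativity argument in the new variables yields $\inprod[L_y]{v^\om} \geq 0$ for all $v$, whence $y$ (normalized) is an eigenvector of $L_y$ with eigenvalue $0$.

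There is no real technical obstacle here; the only subtlety worth flagging explicitly is that the notion of ``eigenvalue'' in use is the variational one from the Preliminaries, so establishing ``eigenvector with eigenvalue $0$'' amounts precisely to exhibiting a vector at which the form vanishes together with a global lower bound of $0$ on the form. The role of $m$ being even is essential for both the sign-invariance of the substitution $w_i = y_i v_i$ (since $\inprod[L_y]{v^\om}$ is degree $m$) and the nonnegativity of $\zeta$, so I would note that restriction at the start of the proof.
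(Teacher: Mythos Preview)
Your proof is correct and follows essentially the same approach as the paper, which simply states that the claim ``follows directly from the definition of $\zeta$-function, and Definition~\ref{def:laplacian},~\ref{def:signed_laplacian}.'' You have spelled out explicitly what that one-line proof leaves implicit: the nonnegativity of $\zeta$ for even $m$, the vanishing of $\zeta$ at the all-ones tuple, and the reduction of the signed case via $w_i = y_i v_i$.
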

\begin{proof}[Proof of Lemma \ref{lemma:laplacian_eigenpair}]
This follows directly from the definition of $\zeta$-function, and Definition \ref{def:laplacian}, \ref{def:signed_laplacian}.
\end{proof}

\begin{lemma}[Invariant under Scaling]
For any non-zero $\alpha \in \R$, we have 
\[
R_L(v) = R_L(\alpha v) \,.
\]
\label{lemma:invariant_scaling}
\end{lemma}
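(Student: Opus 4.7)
The plan is to exploit the fact that both the numerator $\inprod[L]{v^\om}$ and the denominator $\norm{v}^m$ in the Rayleigh quotient are homogeneous of degree exactly $m$ in $v$, and that $m$ is even, so the scaling factors cancel cleanly.

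First, I would examine the $\zeta$-function from Definition \ref{def:zeta_function}. Each summand $\left(\sum_{i\in \Ical} v_i - \sum_{j\notin \Ical} v_j\right)^m$ is an $m$-th power of a linear form in $v_1,\dots,v_m$, so it is a homogeneous polynomial of degree $m$. Summing over the $\binom{m}{m/2}$ index subsets preserves homogeneity, giving $\zeta(\alpha v_1,\dots,\alpha v_m) = \alpha^m \zeta(v_1,\dots,v_m)$. Applying this identity termwise in Definition \ref{def:laplacian} yields
\[
\inprod[L]{(\alpha v)^\om} = \frac{1}{m!\binom{m}{m/2}} \sum_{(i_1,\dots,i_m)\in \Ecal} \zeta(\alpha v_{i_1},\dots,\alpha v_{i_m}) = \alpha^m \inprod[L]{v^\om}.
\]
Alternatively, one can observe directly from the multilinearity of the inner product that $\inprod[L]{(\alpha v)^\om} = \alpha^m \inprod[L]{v^\om}$, without going through $\zeta$.

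For the denominator, $\norm{\alpha v}^m = \abs{\alpha}^m \norm{v}^m$. Since the paper restricts to $m \in \{2,6,10,14,\dots\}$, which are all even, $\abs{\alpha}^m = \alpha^m$ for every non-zero real $\alpha$. Combining,
\[
R_L(\alpha v) = \frac{\inprod[L]{(\alpha v)^\om}}{\norm{\alpha v}^m} = \frac{\alpha^m \inprod[L]{v^\om}}{\alpha^m \norm{v}^m} = R_L(v),
\]
with the cancellation valid because $\alpha \neq 0$ ensures $\alpha^m \neq 0$. There is no real obstacle here; the only subtlety worth flagging is that evenness of $m$ is essential, since otherwise $\abs{\alpha}^m \neq \alpha^m$ for negative $\alpha$ and scale-invariance would fail for sign flips. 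This is precisely why the statement is clean under the paper's standing assumption on $m$.
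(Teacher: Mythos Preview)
Your proposal is correct and follows essentially the same approach as the paper: both arguments use the degree-$m$ homogeneity of $\zeta$ (hence of $\inprod[L]{v^\om}$) for the numerator and $\norm{\alpha v}^m = \alpha^m \norm{v}^m$ (with $m$ even) for the denominator, then cancel $\alpha^m$. Your additional remarks on multilinearity and the necessity of even $m$ are accurate but not needed for the core argument.
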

\begin{proof}[Proof of Lemma \ref{lemma:invariant_scaling}]
\begin{align*}
R_L(\alpha v)
% &= \frac{1}{m! \binom{m}{m/2}} \sum_{(i_1,\dots,i_m)\in \Ecal} \frac{\left[(\alpha v_{i_1} + \alpha v_{i_2} + \dots + \alpha v_{i_{m/2}} - \alpha v_{i_{m/2+1}} - \dots - \alpha v_{i_m})^m + \dots \right] }{\norm{\alpha v}^m} \\
% &= \frac{1}{m! \binom{m}{m/2}} \sum_{(i_1,\dots,i_m)\in \Ecal} \frac{\alpha^m \left[( v_{i_1} +  v_{i_2} + \dots +  v_{i_{m/2}} -  v_{i_{m/2+1}} - \dots -  v_{i_m})^m + \dots \right] }{\alpha^m \norm{v}^m} \\
% &= R_L(v)
&= \frac{1}{m! \binom{m}{m/2}} \sum_{(i_1,\dots,i_m)\in \Ecal} \frac{\zeta(\alpha v_{i_1},\dots,\alpha v_{i_m}) }{\norm{\alpha v}^m} \\
&= \frac{1}{m! \binom{m}{m/2}} \sum_{(i_1,\dots,i_m)\in \Ecal} \frac{\alpha^m \zeta(v_{i_1},\dots,v_{i_{m}}) }{\alpha^m \norm{v}^m} \\
&= \frac{1}{m! \binom{m}{m/2}} \sum_{(i_1,\dots,i_m)\in \Ecal} \frac{\zeta(v_{i_1},\dots,v_{i_{m}}) }{\norm{v}^m} \\
&= R_L(v)
\,.
\end{align*}
\end{proof}

\begin{lemma}[Invariant under Scaling]
For any $\delta \in \R$ and $v\in \R^n$, $v\perp\onevct$, we have 
\[
R_L(v) \geq R_L(v + \delta \onevct) \,.
\]
\label{lemma:invariant_shifting}
\end{lemma}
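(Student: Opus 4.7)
The plan is to show that the numerator $\inprod[L]{(v+\delta\onevct)^{\otimes m}}$ is exactly equal to $\inprod[L]{v^{\otimes m}}$, while the denominator $\norm{v+\delta\onevct}^m$ is at least $\norm{v}^m$, and then conclude from non-negativity of the numerator.

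First I would establish a shift-invariance property for the $\zeta$-function: for any $\delta \in \R$,
\[
\zeta(v_1+\delta,\dots,v_m+\delta) = \zeta(v_1,\dots,v_m).
\]
To see this, fix any $\Ical \subset [m]$ with $\abs{\Ical} = m/2$, and observe that
\[
\sum_{i\in\Ical}(v_i+\delta) - \sum_{j\notin\Ical}(v_j+\delta) = \sum_{i\in\Ical}v_i - \sum_{j\notin\Ical}v_j + \bigl(\abs{\Ical} - (m-\abs{\Ical})\bigr)\delta.
\]
The coefficient of $\delta$ is $m/2 - m/2 = 0$, so the shifts cancel term-by-term in Definition \ref{def:zeta_function}. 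This is the one place where the choice $\abs{\Ical} = m/2$ is crucial, and it is arguably the main (though short) step of the argument.

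Applying this pointwise in the sum defining $\inprod[L]{(v+\delta\onevct)^{\otimes m}}$ in Definition \ref{def:laplacian} yields $\inprod[L]{(v+\delta\onevct)^{\otimes m}} = \inprod[L]{v^{\otimes m}}$. Moreover, because $\zeta$ is a sum of $m$-th powers with $m$ even, $\zeta \geq 0$ pointwise, so this common value is non-negative.

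For the denominator, the hypothesis $v \perp \onevct$ gives $\norm{v+\delta\onevct}^2 = \norm{v}^2 + \delta^2 n \geq \norm{v}^2$ by Pythagoras, and since $m$ is even this implies $\norm{v+\delta\onevct}^m \geq \norm{v}^m$. Combining the equality of numerators, the non-negativity of the common numerator, and the inequality of denominators gives
\[
R_L(v+\delta\onevct) = \frac{\inprod[L]{v^{\otimes m}}}{\norm{v+\delta\onevct}^m} \leq \frac{\inprod[L]{v^{\otimes m}}}{\norm{v}^m} = R_L(v),
\]
which is the claim. No further technicalities appear, so there is no serious obstacle beyond noticing the cancellation in $\zeta$.
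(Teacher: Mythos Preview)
Your proof is correct and follows essentially the same route as the paper's: numerator equality via shift-invariance of $\zeta$, denominator inequality via $v\perp\onevct$, then combine. You are actually more explicit than the paper in two places---you spell out why the $\delta$'s cancel in $\zeta$ (the paper just asserts $\zeta(v_{i_1}+\delta,\dots,v_{i_m}+\delta)=\zeta(v_{i_1},\dots,v_{i_m})$), and you note the non-negativity of the numerator, which the paper uses silently when replacing the larger denominator by the smaller one.
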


\begin{proof}[Proof of Lemma \ref{lemma:invariant_shifting}]
Note that 
\begin{align*}
R_L(v + \delta \onevct)
&= \frac{1}{m! \binom{m}{m/2}} \sum_{(i_1,\dots,i_m)\in \Ecal} \frac{ \zeta(v_{i_1}+\delta,\dots,v_{i_{m}}+\delta) }{ \norm{v + \delta \onevct}^m} \\
&= \frac{1}{m! \binom{m}{m/2}} \sum_{(i_1,\dots,i_m)\in \Ecal} \frac{ \zeta(v_{i_1},\dots,v_{i_{m}}) }{ \norm{v + \delta \onevct}^m} \\
&\leq \frac{1}{m! \binom{m}{m/2}} \sum_{(i_1,\dots,i_m)\in \Ecal} \frac{ \zeta(v_{i_1},\dots,v_{i_{m}}) }{ \norm{v}^m} \\
&= R_L(v)
\,,
\end{align*}
where the inequality follows from the fact that 
\begin{align*}
\norm{v + \delta \onevct}^m
&= \left(\sum_i (v_i + \delta)^2\right)^{m/2} \\
&= \left(\sum_i v_i^2 + n \delta^2 + 2\delta \sum_i v_i \right)^{m/2} \\
&= \left(\sum_i v_i^2 + n\delta^2\right)^{m/2} \\
&\geq \left(\sum_i v_i^2\right)^{m/2} \\
&= \norm{v}^m 
\,.
\end{align*}
\end{proof}

\begin{lemma}[Signed Rayleigh Quotient Lower Bound]
For any hypergraph with Laplacian $L$ and signed Laplacian $L_y$, and for any $\delta \in \R$ and $v\in \R^n$, $v\perp y$, we have 
\[
R_{L_y}(v) \geq R_L(v \circ y + \delta \onevct) 
\,.
\]
\label{lemma:rq_lowerbound}
\end{lemma}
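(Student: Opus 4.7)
The plan is to observe that multiplying by $y$ componentwise is an isometry on $\R^n$ that converts the signed Laplacian Rayleigh quotient into the ordinary Laplacian Rayleigh quotient evaluated at $v \circ y$, and then to invoke Lemma \ref{lemma:invariant_shifting} (the shift inequality for $R_L$) applied to $v \circ y$.

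The first step is to unfold definitions: writing out $R_{L_y}(v) = \frac{1}{m!\binom{m}{m/2} \norm{v}^m} \sum_{(i_1,\dots,i_m)\in \Ecal} \zeta(y_{i_1}v_{i_1},\dots,y_{i_m}v_{i_m})$ and comparing with $R_L(v \circ y)$, the summands match entry-by-entry since $(v\circ y)_i = y_i v_i$. Because every $y_i \in \{-1,+1\}$, we have $\norm{v \circ y}^2 = \sum_i y_i^2 v_i^2 = \norm{v}^2$, so the denominators agree as well. Hence $R_{L_y}(v) = R_L(v \circ y)$.

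The second step is to verify orthogonality: since $\inprod[v \circ y]{\onevct} = \sum_i y_i v_i = \inprod[v]{y} = 0$ by hypothesis, the vector $v \circ y$ is orthogonal to $\onevct$. This is precisely the precondition of Lemma \ref{lemma:invariant_shifting}, which then gives $R_L(v \circ y) \geq R_L(v \circ y + \delta \onevct)$ for every $\delta \in \R$. Chaining this with the equality from the previous step yields the claim.

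I do not expect any real obstacle here; the lemma is essentially a change-of-variables statement combined with the already-proved shift invariance. The one subtle point worth flagging in the write-up is to emphasize explicitly that $y_i \in \{-1,+1\}$ drives both the norm preservation $\norm{v \circ y} = \norm{v}$ and the identification of signed and unsigned $\zeta$-sums; without this, the reduction to Lemma \ref{lemma:invariant_shifting} would fail.
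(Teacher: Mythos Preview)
Your proposal is correct and is essentially the paper's argument, just organized more modularly: the paper expands $RN_L(v\circ y+\delta\onevct)$ and $RD_L(v\circ y+\delta\onevct)$ directly and compares them to $RN_{L_y}(v)$ and $RD_{L_y}(v)$, whereas you first identify $R_{L_y}(v)=R_L(v\circ y)$ and then invoke Lemma~\ref{lemma:invariant_shifting} for the shift step. The underlying computations (cancellation of $\delta$ in the $\zeta$-terms, $\norm{v\circ y}=\norm{v}$ from $y_i^2=1$, and $\sum_i y_i v_i=0$ from $v\perp y$) are identical; your version simply avoids reproving the content of Lemma~\ref{lemma:invariant_shifting}.
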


\begin{lemma}[Existance of Degree Tensor]
For any hypergraph Laplacian $L$, let $A$ denote the corresponding symmetric adjacency tensor, such that $A_{i_1,\dots,i_m} = 1$ if and only if $(i_1,\dots,i_m) \in \Ecal$. Then, there exists a high-order degree tensor $D$ fulfilling:
1) $D_{i_1,\dots,i_m} = 0$ if $i_1,\dots,i_m$ are all different, and 
2) $\inprod[D-A]{v^\om} = \inprod[L]{v^\om}$
for every vector $v \in \R^n$.
\label{lemma:exist_D_tensor}
\end{lemma}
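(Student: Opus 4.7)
The plan is to define $D := L + A$, which is symmetric as a sum of symmetric tensors, and then verify the two requirements in turn. Property (2), $\inprod[D - A]{v^\om} = \inprod[L]{v^\om}$ for every $v$, is then immediate by construction. All the work lies in verifying property (1), namely that $D_{i_1, \dots, i_m} = 0$ whenever $i_1, \dots, i_m$ are pairwise distinct.

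I would argue via the standard correspondence between symmetric $m$-tensors and homogeneous polynomials of degree $m$ in $n$ variables: a symmetric tensor $T$ is uniquely determined by the polynomial $P_T(v) := \inprod[T]{v^\om}$, and for pairwise distinct indices $i_1, \dots, i_m$ the entry $T_{i_1, \dots, i_m}$ equals $1/m!$ times the coefficient of $v_{i_1} \cdots v_{i_m}$ in $P_T$ (since all $m!$ permutations of the index tuple contribute the same tensor value by symmetry). Consequently, property (1) reduces to showing that every square-free monomial has coefficient zero in $\inprod[L]{v^\om} + \inprod[A]{v^\om}$.

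The remainder is a direct coefficient calculation, carried out edge by edge. Fix a hyperedge $e = \{j_1, \dots, j_m\} \in \Ecal$. In $\inprod[A]{v^\om}$, the coefficient of $v_{j_1} \cdots v_{j_m}$ is $m!$, since $A$ equals $1$ on each of the $m!$ orderings of $e$. In $\inprod[L]{v^\om}$, I would expand $\zeta(v_{j_1}, \dots, v_{j_m}) = \sum_{|\Ical| = m/2}(\sum_{i \in \Ical} v_{j_i} - \sum_{i \notin \Ical} v_{j_i})^m$ by the multinomial theorem: each summand indexed by $\Ical$ contributes $m! \prod_i \epsilon_i^{\Ical} = m! (-1)^{m/2}$ to the coefficient of $v_{j_1} \cdots v_{j_m}$, where $\epsilon_i^\Ical = \pm 1$ encodes whether $i \in \Ical$. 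Aggregating this over the $\binom{m}{m/2}$ choices of $\Ical$ and over the $m!$ ordered copies of $e$ appearing in the sum over $\Ecal$, then dividing by the prefactor $m! \binom{m}{m/2}$, the net square-free contribution of $e$ to $\inprod[L]{v^\om}$ is $m! (-1)^{m/2}$.

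The main obstacle is the sign: the identity needs $(-1)^{m/2} = -1$ so that the contribution becomes $-m!$ and cancels exactly with the $+m!$ from $\inprod[A]{v^\om}$. This is precisely ensured by the hypothesis $m \in \{2, 6, 10, \dots\}$, under which $m/2$ is odd; the same restriction explains why the paper excludes $m \in \{4, 8, \dots\}$, where the signs would not align and one would have to modify either the sign convention on $A$ or the normalization of $L$.
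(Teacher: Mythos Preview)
Your proof is correct, and the core computation is the same as the paper's: the coefficient of the square-free monomial $v_{j_1}\cdots v_{j_m}$ in $\inprod[L]{v^\om}$ equals $m!\,(-1)^{m/2} = -m!$ per edge (using that $m/2$ is odd), which cancels the $+m!$ from $\inprod[A]{v^\om}$. The difference is organizational. You set $D := L+A$ and invoke the symmetric-tensor/polynomial correspondence to read off the distinct-index entries directly from the square-free coefficients; the paper instead expands $\inprod[L]{v^\om}$ into monomial groups by power pattern and assigns the repeated-index entries of $D$ explicitly as weighted edge-counts. Your route is cleaner for proving the lemma as stated. The paper's explicit construction, however, is reused in the proof of Theorem~\ref{thm:stage1}, where one needs that each entry of $D_{\yast}$ is a sum of at most $\max(\abs{\Ecal},\,n^{m-1})$ bounded random variables in order to apply Hoeffding entrywise; your definition $D=L+A$ yields the same tensor but does not by itself expose that bound. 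One small gap to close: you only treat tuples $(j_1,\dots,j_m)$ that form an edge; you should also remark that for pairwise distinct non-edge tuples the square-free coefficient is trivially zero in both $\inprod[A]{v^\om}$ and $\inprod[L]{v^\om}$, since $\zeta$ evaluated at an edge produces only monomials supported on that edge's vertices.
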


%%%%%%%%%%

\subsection{High-order Cheeger-type Inequality}

Here, we connect the previous two sections and provide our novel high-order Cheeger-type inequality.

\begin{theorem}[High-order Cheeger-type Inequality]
For any $m$-uniform hypergraph $\Gcal = (\Vcal, \Ecal)$ with Laplacian $L$, we have 
\[
\lsec(L) \geq \phi_\Gcal^m \,.
\]
\label{thm:cheeger_inequality}
\end{theorem}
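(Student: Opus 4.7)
The plan is to move to a variational characterization and lift $v$ to a function on hypervertices. By Lemma \ref{lemma:laplacian_eigenpair} together with Lemma \ref{lemma:invariant_scaling}, $\lsec(L) = \min_{v \perp \onevct,\, v \neq 0} R_L(v)$, so it suffices to show $R_L(v) \geq \phi_\Gcal^m$ for every $v \perp \onevct$. Fix such a $v$ and define a hypervertex function $u : \Hcal \to \R$ by $u_h := \sum_{i \in h} v_i$. Unrolling Definitions \ref{def:zeta_function}--\ref{def:laplacian} recasts the quadratic form as a sum over balanced $m/2$-bipartitions of each hyperedge:
\[
\inprod[L]{v^\om} = \frac{2}{\binom{m}{m/2}} \sum_{e \in \Ecal} \sum_{\{h_1,h_2\}:\, h_1 \sqcup h_2 = e} (u_{h_1} - u_{h_2})^m,
\]
and the constraint $v \perp \onevct$ implies $\sum_{h \in \Hcal} u_h = 0$.

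Next, a median-shift via Lemma \ref{lemma:invariant_shifting}, followed by passage to either the positive or negative part of $u$, reduces the task to bounding $R_L$ when $u \geq 0$ and is supported on at most $N/2$ hypervertices. For $t \geq 0$ define the super-level set $S_t := \{h \in \Hcal : u_h > t\}$, so $|S_t| \leq N/2$ and Definition \ref{def:hypergraph_expansion} gives the pointwise inequality $|\partial S_t| \geq \phi_\Gcal\, |S_t|$. The layer-cake identities
\[
\sum_{h} u_h^m \;=\; m \int_0^\infty t^{m-1}\, |S_t|\, dt, \qquad \sum_{\text{splits}} |u_{h_1}^m - u_{h_2}^m| \;=\; m \int_0^\infty t^{m-1}\, |\partial S_t|\, dt
\]
then deliver the first-order Cheeger bound $\sum_{\text{splits}} |u_{h_1}^m - u_{h_2}^m| \geq \phi_\Gcal \sum_h u_h^m$.

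The last step promotes this to an $m$-th power bound matching the Laplacian expression. Factoring $u_{h_1}^m - u_{h_2}^m = (u_{h_1} - u_{h_2})\bigl(u_{h_1}^{m-1} + u_{h_1}^{m-2} u_{h_2} + \dots + u_{h_2}^{m-1}\bigr)$ and applying an iterated H\"older inequality isolates a factor of $(u_{h_1} - u_{h_2})^m$ against a higher-moment factor controlled by $\sum_h u_h^m$, producing $\sum_{\text{splits}} (u_{h_1} - u_{h_2})^m \geq \phi_\Gcal^m \sum_h u_h^m$. A final elementary comparison between $\sum_h u_h^m$ and $\norm{v}^m$ inherited from the linear embedding $u_h = \sum_{i \in h} v_i$, together with absorption of the combinatorial constant $2/\binom{m}{m/2}$, yields $R_L(v) \geq \phi_\Gcal^m$. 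The main obstacle is this final $m$-th power upgrade: the classical quadratic Cheeger proof uses a single Cauchy--Schwarz at a factor-of-two cost, whereas the high-order case requires a careful iterated H\"older argument and delicate bookkeeping of how multiple balanced bipartitions of a single hyperedge are counted in Definition \ref{def:boundary_set}, so that no extraneous $n$- or $m$-dependent factor contaminates the clean $\phi_\Gcal^m$ bound.
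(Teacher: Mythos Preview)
Your high-level strategy---lift $v$ to the hypervertex function $u$, recenter by a median, then run a sweep-cut argument---is exactly the skeleton of the paper's proof. The paper, however, executes the sweep via a \emph{random threshold} $t$ with density $2|t|$ on a suitably normalized interval (so that $\Expect{|\partial S_t|}$ and $\Expect{\min(|S_t|,|\Hcal\setminus S_t|)}=\sum_j u_{h_j}^2$ arise directly in second-moment form), rather than your deterministic layer-cake with the $m$-th moment $\sum_h u_h^m$. That choice matters precisely at the two places where your sketch is incomplete.

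First, the ``iterated H\"older'' upgrade from $\sum|u_{h_1}^m-u_{h_2}^m|\ge\phi_\Gcal\sum_h u_h^m$ to $\sum(u_{h_1}-u_{h_2})^m\ge\phi_\Gcal^m\sum_h u_h^m$ is not routine. A single H\"older with exponents $(m,\tfrac{m}{m-1})$ gives
\[
\sum|u_{h_1}^m-u_{h_2}^m|\le\Bigl(\sum(u_{h_1}-u_{h_2})^m\Bigr)^{1/m}\Bigl(\sum\bigl(u_{h_1}^{m-1}+\cdots+u_{h_2}^{m-1}\bigr)^{m/(m-1)}\Bigr)^{(m-1)/m},
\]
and the second factor is a sum over \emph{splits of hyperedges}, not over hypervertices; bounding it by $(\sum_h u_h^m)^{(m-1)/m}$ requires control of hyperdegrees and of the split multiplicity, which is exactly the extraneous factor you flag at the end. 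The paper avoids this: because the density is $2|t|$, the quantity to which H\"older is applied is $\sum|u_{h_1}-u_{h_2}|\,(|u_{h_1}|+|u_{h_2}|)$ rather than $\sum|u_{h_1}^m-u_{h_2}^m|$, and the conjugate factor is then compared to the second-moment expression $\sum_j u_{h_j}^2$.

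Second, your final ``elementary comparison between $\sum_h u_h^m$ and $\|v\|^m$'' is not elementary. The two live on index sets of sizes $N=\binom{n}{m/2}$ and $n$, and the linear map $v\mapsto u$ is far from an isometry; a naive Cauchy--Schwarz introduces a factor $\binom{n-1}{m/2-1}$ in the wrong direction. The paper never makes this comparison: it rescales $u$ so that a specific quadratic expression equals $1$, invokes Lemmas \ref{lemma:invariant_scaling}--\ref{lemma:invariant_shifting} to secure $R_L(v)\ge R_L(u)$, and from then on works entirely with $R_L(u)$ and the hypervertex second moments, so $\|v\|^m$ never reappears.
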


%%%%%%%%%%%%%%%%%%%%%

\begin{remark}
Here we would like to discuss our choice of hypergraph Laplacian as well as the related high-order definitions.
Despite the fact that there have been multiple works trying to generalize some graph properties (including hypergraph Laplacian, Rayleigh quotient, hyperedge expansion, Cheeger constant, among others) to hypergraphs \citep{li2018submodular, mulas2021cheeger, yoshida2019cheeger,chan2018spectral, chen2017fiedler,chang2020hypergraph}, we want to highlight that none above fulfill the requirement of our high-order inference tasks. 

To obtain a set of hypergraph definitions that are useful for our inference task, we need to fulfill all the lemmas in this section.
In particular, here are the necessary conditions:
\begin{itemize}
    \item To fulfill Lemma \ref{lemma:invariant_scaling}, each term in the parenthesis in the $\zeta$-function (Definition \ref{def:zeta_function}) must have a coefficient of $1$ or $-1$.
    \item To fulfill Lemma \ref{lemma:invariant_shifting}, the $\zeta$-function (Definition \ref{def:zeta_function}) and the norm in the denominator of the Rayleigh quotient (Definition \ref{def:rayleigh}) must have the same power. Additionally, the summation of coefficients of the terms in the parenthesis in the $\zeta$-function (Definition \ref{def:zeta_function}) must be $0$. Combining with the last condition, it requires the number of plus and minus terms to be equal.
    \item To fulfill Lemma \ref{lemma:exist_D_tensor}, the number of minus terms in the parenthesis in the $\zeta$-function (Definition \ref{def:zeta_function}) must be odd, so that $\inprod[-A]{v^\om}$ can be canceled by $\inprod[L]{v^\om}$ in those entries without repeating indices.
    \item To fulfill Lemma \ref{lemma:laplacian_eigenpair}, the Rayleigh quotient must achieve minimum with $v = \onevct$.
\end{itemize} 
At this point it should be clear that we cannot simply use some arbitrary definition from the prior literature. Our hypergraph definitions are carefully constructed and chosen, to fulfill all the necessary conditions above.
\label{remark:choice_of_definitions}
\end{remark}
 
\begin{remark}
Regarding the choice of order $m$, note that if $m$ is odd, the task of label recovery will not be feasible in the current formulation, since the Rayleigh quotient will be unbounded below (instead of a minimum $0$).
From Remark \ref{remark:choice_of_definitions} it can be noticed that if we keep the current definition of $\zeta$-functions, the only possible $m$'s are $2, 6, 10, 14,\dots$ The definition of $\zeta$-functions can be generalized for other even orders, such that for every $m$-tuple $\Ical$, we take the average of many $m$-power terms by iterating through the permutation of all possible signs. For clarity of exposition we keep the current simpler definition, and focus on the case of $m = 2, 6, 10, 14,\dots$  
\label{remark:m_order}
\end{remark}

\section{Exact Recovery of True Labels}
In this section, we present an inference algorithm, which recovers the underlying true labels in our model. To do so, we take a two stage approach. In the first stage we only utilize the hyperedge information observed from $X$, and this allows us to narrow down our solution space to two possible solutions. In the second stage, with the help of the node information observed from $z$, we are able to infer the correct labeling of the nodes. 

%%%%%%%%%%

\subsection{Stage One: Inference up to Permutation}

We start by considering the following combinatorial problem
\begin{align}
\maximize_{y} \qquad  &\inprod[X]{y^\om} \nonumber \\
\st \qquad
&y \in \{-1,+1\}^n \,. 
\label{eq:opt_combinatorial}
\end{align}
The issue with this optimization formulation is that the problem is not convex, which makes the analysis hard and intractable. 
Instead, we consider the following relaxed version
\begin{align}
\maximize_{Y} \qquad  &\inprod[X]{Y} \nonumber\\
\st \qquad
&Y \in \vpsd \nonumber \\
& -1 \leq Y_\text{odd} \leq 1 \,, \forall \text{odd} \in \sgmc \nonumber \\
&Y_\text{even} = 1 \,, \forall \text{even} \in \sgmm
\,.
\label{eq:opt_primal} 
\end{align}
Alternatively, we represent the last two constraints as $-1 \leq Y_\sgmc \leq 1$, and $Y_\sgmm = 1$.
Recall that $\vpsd$ is the convex cone of rank-one tensors. The motivation is that we are using a rank-one tensor $Y$ instead of the outer product $y^\om$, so that the problem becomes convex in the objective function and the constraints. 
We have $-1 \leq Y_\sgmc \leq 1$ because the product of $y$'s is either $-1$ or $+1$, and we have $Y_\sgmm = 1$ because if every $y_i$ repeats an even number of times, we know the product must be $+1$.

It remains to prove correctness of program \eqref{eq:opt_primal}. We are interested in identifying the regime, in which \eqref{eq:opt_primal} returns the exact rank-one tensor solution $Y^\ast := \yast^{\om} = (-\yast)^{\om}$. 
We now present our main theorem on inference. The proof can be found in Appendix \ref{appendix:proof_recovery}.

\begin{theorem}[Inference from Hyperedge Observation]
For an $m$-order structured prediction model with underlying hypergraph $\Gcal = (\Vcal, \Ecal)$ and hyperedge observation $X$, 
the rank-one tensor solution $Y^\ast := \yast^{\om} = (-\yast)^{\om}$ can be recovered from the convex optimization program \eqref{eq:opt_primal} with probability at least 
$1 - \epsilon_1(\phi_\Gcal, n, p)$,
where
\begin{align}
\epsilon_1(\phi_\Gcal, n, p)
= 
2n^m  \exp\left(-\frac{(1-2p)^2 \phi_\Gcal^{2m}}{8n^m \cdot \max(\abs{\Ecal}, n^{m-1})}\right) 
+ \frac{16(1-p) \abs{\Ecal}}{(1-2p)^2 \phi_\Gcal^{2m}}
\,.
\label{eq:epsilon1}
\end{align}
\label{thm:stage1}
\end{theorem}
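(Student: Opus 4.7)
My plan is to verify optimality of the candidate $Y^\ast = \yast^{\om}$ via a KKT / dual-certificate construction, and then use the Cheeger-type inequality (Theorem \ref{thm:cheeger_inequality}) together with tensor concentration to show the certificate is valid with the stated probability. First I would write down the Lagrangian of \eqref{eq:opt_primal}: introduce a dual tensor $\Lambda$ for the conic constraint $Y \in \vpsd$ (which must therefore lie in the dual cone $\cpsd$), nonnegative scalars $\mu^+,\mu^-$ for the box constraints on $\sgmc$, and unrestricted scalars $\nu$ for the equality constraints on $\sgmm$. Stationarity yields an identity $X + M - \Lambda = 0$, where $M$ is a tensor supported only on $\sgmc \cup \sgmm$ and encodes the scalar multipliers; complementary slackness forces $\inprod[\Lambda]{Y^\ast} = 0$ together with the usual sign/support conditions on $\mu^\pm$. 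Primal feasibility of $Y^\ast$ is immediate since it is a rank-one tensor with $\pm 1$ entries matching the required sign patterns.

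Second, I would build the dual certificate by choosing $M$ so that it cancels the mean part $\Expect{}{X} = (1-2p)\,\yast^{\om}\circ A$ on the hyperedge support, up to a Laplacian piece. With this choice, $\Lambda$ is (a positive rescaling of) the signed hypergraph Laplacian $L_{\yast}$ from Definition \ref{def:signed_laplacian} perturbed by a zero-mean noise tensor $W$ whose entries lie in $\{-2,0,2\}$ on $\Ecal$. Lemma \ref{lemma:laplacian_eigenpair} gives $\inprod[L_{\yast}]{\yast^{\om}} = 0$, handling complementary slackness, so it only remains to verify $\Lambda \in \cpsd$, i.e., $\inprod[\Lambda]{v^\om}\geq 0$ for every $v\in\R^n$.

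Third, I would establish cone membership of the noise-free part. For any unit $v$, I combine Lemma \ref{lemma:rq_lowerbound} with Lemma \ref{lemma:invariant_shifting} to reduce the signed Rayleigh quotient $R_{L_{\yast}}(v)$ to the ordinary Rayleigh quotient of $L$ on a vector orthogonal to $\onevct$, and then apply Theorem \ref{thm:cheeger_inequality} to lower-bound that quotient by $\phi_\Gcal^m$. Rescaling by $(1-2p)$ produces a deterministic spectral-gap-type lower bound on $\inprod[\Lambda]{v^\om}$ in every direction not aligned with $\yast$, while the direction $v=\yast$ contributes exactly zero by Lemma \ref{lemma:laplacian_eigenpair}. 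Finally I would control the noise $\inprod[W]{v^\om}$ in two pieces matching the two terms of \eqref{eq:epsilon1}: an $\varepsilon$-net argument on $\sphere$, combined with Hoeffding's inequality applied to the $O(\abs{\Ecal})$ independent bounded entries of $W$, yields the first term $2n^m \exp(-(1-2p)^2\phi_\Gcal^{2m}/(8n^m\max(\abs{\Ecal}, n^{m-1})))$, and a Chebyshev / second-moment bound on the scalar $\inprod[W]{\yast^{\om}}$ (whose variance is of order $(1-p)\abs{\Ecal}$) yields the second term $16(1-p)\abs{\Ecal}/((1-2p)^2\phi_\Gcal^{2m})$.

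The hardest step will be the uniform tensor concentration: unlike the matrix ($m=2$) case, bounding the operator-norm-type perturbation of an order-$m$ noise tensor $W$ requires a net on $\sphere$ whose effective log-size scales like $n\log n$, which propagates into the $n^m$ prefactor and into the coupling of the allowable noise scale to both $\abs{\Ecal}$ and $n^{m-1}$. Compounding this, the Cheeger spectral gap guaranteed by Theorem \ref{thm:cheeger_inequality} is only $\phi_\Gcal^m$ rather than $\phi_\Gcal^2$, so the concentration must be squeezed tightly enough that the noise cannot wipe out the gap; balancing these two effects is what fixes the precise form of $\epsilon_1(\phi_\Gcal,n,p)$.
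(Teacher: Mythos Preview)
Your high-level plan (KKT dual certificate, identify the cone variable with a signed Laplacian, invoke Theorem~\ref{thm:cheeger_inequality} for the deterministic gap, then concentrate the noise) is exactly the paper's strategy. Two of the concrete steps, however, do not match and would not produce the stated $\epsilon_1$.

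\textbf{(i) The certificate must be data-dependent.} You propose to pick $M$ so that it cancels the \emph{mean} $\Expect{X}$, leaving $\Lambda = c\,L_{\yast} + W$ with a zero-mean noise $W$. But then $\inprod[\Lambda]{\yast^{\om}} = \inprod[W]{\yast^{\om}}$ is almost surely nonzero, so complementary slackness fails (Lemma~\ref{lemma:laplacian_eigenpair} only kills the Laplacian part). The paper instead sets the dual variable $V=D_{\yast}$ to be the degree tensor constructed \emph{from the observed} $X$ via the procedure of Lemma~\ref{lemma:exist_D_tensor}; this makes $A=D_{\yast}-X$ have the exact $\zeta$-function structure for \emph{every} realization, so $\inprod[A]{\yast^{\om}}=0$ exactly. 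The randomness is then split as
\[
\inprod[D_{\yast}-X]{u^{\om}}
\;\geq\;
\underbrace{\inprod[\Expect{D_{\yast}-X}]{u^{\om}}}_{\text{Cheeger: }\geq (1-2p)\phi_\Gcal^m}
\;+\;
\underbrace{\inprod[\Expect{X}-X]{u^{\om}}}_{\text{term 2}}
\;+\;
\underbrace{\inprod[D_{\yast}-\Expect{D_{\yast}}]{u^{\om}}}_{\text{term 1}}\,,
\]
i.e.\ three pieces rather than two, with the $D$-fluctuation and the $X$-fluctuation handled by completely different mechanisms.

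\textbf{(ii) No $\varepsilon$-net is used, and one would not give the right prefactor.} You attribute the $2n^m\exp(\cdot)$ term to a covering argument on $\sphere$ ``whose effective log-size scales like $n\log n$''. A net of that size would yield a prefactor $e^{\Theta(n\log n)}=n^{\Theta(n)}$, not $n^m$; your claim is internally inconsistent. In the paper the $n^m$ comes from a union bound over the $n^m$ \emph{entries} of $D_{\yast}$: Cauchy--Schwarz gives $|\inprod[D_{\yast}-\Expect{D_{\yast}}]{u^{\om}}|\leq n^{m/2}\norminf{\vect{D_{\yast}-\Expect{D_{\yast}}}}$, each entry is a sum of at most $\max(\abs{\Ecal},n^{m-1})$ bounded variables, and entry-wise Hoeffding plus a union over $n^m$ entries yields the first term of $\epsilon_1$. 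Likewise, the second term of $\epsilon_1$ is \emph{not} a bound on the $\yast$-direction (which is already zero by complementary slackness) but a crude Markov/Frobenius bound
$\Prob{\lmax(X-\Expect{X})\geq t}\leq t^{-2}\,\Expect{\normf{X-\Expect{X}}^2}=4p(1-p)\abs{\Ecal}/t^2$
that controls \emph{all} directions $u$ at once.
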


\begin{remark}
A natural question to ask is, under what topological and statistical conditions can we obtain a high probability guarantee from Theorem \ref{thm:stage1}. 
An observation is that if the Cheeger constant of the underlying hypergraph is large, or the noise level is small, recovery is more likely to succeed. In Section \ref{section:example_graphs}, we analyze at some interesting classes of hypergraphs with good expansion property (large Cheeger constant), so that high probability recovery can be guaranteed.
\end{remark}

%%%%%%%%%%

\subsection{Stage Two: Exact Inference}

In the previous section, we established the high probability inference guarantee for the rank-one tensor solution $Y^\ast := \yast^{\om} = (-\yast)^{\om}$. By taking a factorization step, we know either $\yast$ or $-\yast$ is the correct label vector. In this section, our goal is to decide the correct labeling using the node observation $z$. 

\begin{theorem}
Let $y \in \{\yast, -\yast\}$.
The correct label vector $\yast$ can be recovered from program 
\begin{align}
z^\top \yast = \max_{y \in \{\yast, -\yast\}} z^\top y \,. 
\label{opt:stage2}
\end{align}
with probability at least 
$1 - \epsilon_2(n, q)$,
where
\begin{align}
\epsilon_2(n, q)
= 
\econst^{-(1-2q)^2 n / 2}
\,.
\end{align}
\label{thm:stage2}
\end{theorem}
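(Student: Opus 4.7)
The plan is to reduce the problem to a one-sided concentration bound on a sum of independent bounded random variables, then apply Hoeffding's inequality.

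First, I would observe that program \eqref{opt:stage2} simply asks whether $z^\top \yast \geq z^\top(-\yast)$, i.e., whether $z^\top \yast \geq 0$, and that the recovery is correct whenever this inequality is \emph{strict} (ties can be handled by an arbitrary deterministic rule, or shown to occur with probability zero when $n$ is odd, and absorbed into the same bound by working with $z^\top \yast > 0$). So the quantity of interest is $\Pr[z^\top \yast > 0]$.

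Next, I would define $W_i := z_i y_i^\ast$ for each $i \in [n]$. By the generative model for $z$, the random variables $\{W_i\}_{i=1}^n$ are i.i.d., each taking the value $+1$ with probability $1-q$ and $-1$ with probability $q$, so $\Expect{W_i} = 1 - 2q$ and $W_i \in [-1,+1]$. Then $z^\top \yast = \sum_{i=1}^n W_i$ has mean $n(1-2q)$, and we need to bound
\begin{equation*}
\Prob{z^\top \yast \leq 0} = \Prob{\sum_{i=1}^n (W_i - (1-2q)) \leq -n(1-2q)}.
\end{equation*}

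Applying Hoeffding's inequality to the centered i.i.d. sum of variables bounded in $[-1,+1]$ with deviation $t = n(1-2q)$ yields
\begin{equation*}
\Prob{z^\top \yast \leq 0} \leq \exp\!\left(-\frac{2 t^2}{\sum_{i=1}^n (1-(-1))^2}\right) = \exp\!\left(-\frac{2 n^2 (1-2q)^2}{4n}\right) = \econst^{-(1-2q)^2 n / 2}.
\end{equation*}
This is exactly $\epsilon_2(n,q)$, so the correct label vector $\yast$ is identified by \eqref{opt:stage2} with probability at least $1 - \epsilon_2(n,q)$.

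There is no significant obstacle here: the proof is essentially a single application of Hoeffding to a sum of i.i.d. $\pm 1$ random variables. The only minor care point is formulating the ``tie'' event $z^\top \yast = 0$, which either occurs with probability zero (odd $n$) or can be absorbed into the one-sided Hoeffding tail without changing the constants.
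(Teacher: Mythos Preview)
Your proposal is correct and follows essentially the same approach as the paper: both reduce the recovery event to $\Prob{z^\top \yast \leq 0}$ and bound it by a single application of Hoeffding's inequality to the i.i.d.\ $\pm 1$ variables $W_i = z_i y_i^\ast$, yielding $\econst^{-(1-2q)^2 n/2}$. The paper's proof is terser (it omits the explicit centering and the tie discussion), but the argument is identical.
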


\begin{proof}[Proof of Theorem \ref{thm:stage2}]
Applying Hoeffding's inequality, we obtain that 
\begin{align*}
\Prob{z^\top \yast \leq -z^\top \yast} 
&= \Prob{z^\top \yast \leq 0} \\
&\leq \econst^{-(1-2q)^2 n / 2} 
\,.
\end{align*}
\end{proof}

%%%%%%%%%%

\begin{corollary}
Combining the results in Theorem \ref{thm:stage1} and \ref{thm:stage2}, we obtain that exact inference of the correct label vector $y = \yast$ can be achieved with probability at least 
\begin{equation}
1 -  \epsilon_1(\phi_\Gcal, n, p) - \epsilon_2(n, q) \,.
\label{eq:combined_bound}
\end{equation}
\label{corollary:two_stage}
\end{corollary}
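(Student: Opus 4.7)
The plan is to obtain Corollary \ref{corollary:two_stage} by a direct union bound over the failure events of the two stages. Exact recovery of $\yast$ from the two-stage algorithm requires (i) the convex program \eqref{eq:opt_primal} in Stage One to return the rank-one tensor $Y^\ast = \yast^{\om} = (-\yast)^{\om}$, so that after factorization we obtain the candidate set $\{\yast, -\yast\}$, and (ii) the maximization \eqref{opt:stage2} in Stage Two to correctly select $\yast$ from this two-element set using the node observation $z$. If either stage fails, exact recovery fails; conversely, if both succeed, the output is exactly $\yast$.

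First I would define the two failure events: let $A$ be the event that the solution of \eqref{eq:opt_primal} does not equal $Y^\ast$, and let $B$ be the event that \eqref{opt:stage2} does not return $\yast$. Theorem \ref{thm:stage1} gives $\Prob{A} \leq \epsilon_1(\phi_\Gcal, n, p)$ and Theorem \ref{thm:stage2} gives $\Prob{B} \leq \epsilon_2(n, q)$. The event that the overall algorithm fails to recover $\yast$ is contained in $A \cup B$. By the union bound,
\begin{align*}
\Prob{\text{algorithm fails to recover } \yast}
\leq \Prob{A} + \Prob{B}
\leq \epsilon_1(\phi_\Gcal, n, p) + \epsilon_2(n, q),
\end{align*}
which yields the claimed lower bound \eqref{eq:combined_bound} on the success probability.

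There is essentially no obstacle here, since the two failure probabilities were already established as unconditional tail bounds in the preceding theorems; the only thing worth noting is that the hyperedge observations $X$ and the node observations $z$ are generated independently in the definition of the model, so the two stages operate on independent randomness and no conditional-probability subtlety arises. Even without appealing to independence, the union bound step is valid, so the corollary follows immediately.
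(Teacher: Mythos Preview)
Your proposal is correct and matches the paper's own proof, which is simply a one-line application of the union bound to Theorems \ref{thm:stage1} and \ref{thm:stage2}. Your additional remark about independence of $X$ and $z$ is accurate but, as you note, not needed for the union-bound argument.
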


\begin{proof}[Proof of Corollary \ref{corollary:two_stage}]
Apply a union bound to Theorem \ref{thm:stage1} and \ref{thm:stage2}.
\end{proof}

\begin{remark}
Given $p, q < 0.5$, we observe that as long as $n \to \infty$, we know $\epsilon_2(n, q) \to 0$ (an exponential decay). As a result, whether one can obtain a high probability guarantee in the shape of $1 - O(n^{-1})$ depends only on the order of $\phi_\Gcal$, that is, the topological structure of the underlying hypergraph. In Section \ref{section:example_graphs}, we investigate hypergraphs with good expansion properties, that allow us to achieve a high probability guarantee of exact inference.  
\end{remark}
\section{Examples of Hypergraphs with Good Expansion Property}
\label{section:example_graphs}
In this section, we consider some example classes of hypergraphs with good expansion property, and demonstrate that they lead to high probability guarantees in our exact inference algorithm.

We first analyze complete hypergraphs.

\begin{definition}[Complete Hypergraphs]
An $m$-uniform hypergraph $\Gcal = (\Vcal, \Ecal)$ is complete, if for every $m$-vertex tuple $e = (i_1,\dots,i_m)$, we have $e \in \Ecal$.
\end{definition}

\begin{proposition}[Expansion Property of Complete Hypergraphs]
For any complete hypergraph $\Gcal = (\Vcal, \Ecal)$, we have 
\[
\phi_\Gcal
= \frac{N}{2}
\,.   
\]
\end{proposition}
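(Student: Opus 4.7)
The plan is to establish $\phi_\Gcal = N/2$ by proving matching upper and lower bounds on $\phi_S$ over the admissible range $0 < \abs{S} \leq N/2$.

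For the lower bound, I would fix any $S \subset \Hcal$ with $0 < \abs{S} \leq N/2$ and evaluate $\abs{\partial S}$ directly. In the complete hypergraph every $m$-subset of $[n]$ is a hyperedge, so each disjoint pair $(h_1, h_2)$ with $h_1 \in S$ and $h_2 \in \Hcal \setminus S$ contributes to the boundary in the sense of Definition \ref{def:boundary_set}. A double-counting argument — summing over $h_1 \in S$ the number of $h_2 \in \Hcal \setminus S$ disjoint from $h_1$ — then yields $\abs{\partial S} = \abs{S}(N - \abs{S})$, so $\phi_S = N - \abs{S}$. Because $\abs{S} \leq N/2$, this gives $\phi_S \geq N/2$.

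For the upper bound, I would exhibit a balanced set $S \subset \Hcal$ of size $N/2$ that saturates this count. The identity $\abs{\partial S} = \abs{S}(N - \abs{S})$ then gives $\phi_S = N - N/2 = N/2$, matching the lower bound and forcing $\phi_\Gcal = N/2$.

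The main technical point will be carrying out the double-count consistently with the bookkeeping convention used in the proof of Theorem \ref{thm:cheeger_inequality}: the set-builder notation of Definition \ref{def:boundary_set} admits several disjoint decompositions of the same $m$-subset landing across the cut, so one must be careful about how each such decomposition is counted toward $\abs{\partial S}$. Once the convention is pinned down, the proposition reduces to a one-variable minimization of $N - \abs{S}$ on $(0, N/2]$, which is attained exactly at $\abs{S} = N/2$.
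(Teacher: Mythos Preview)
Your approach matches the paper's proof: both compute $\phi_S = \abs{\partial S}/\abs{S} = \abs{S}\cdot\abs{\Hcal\setminus S}/\abs{S} = N - \abs{S}$ directly for every admissible $S$ and then minimize over $0 < \abs{S} \leq N/2$ to obtain $N/2$. The paper does not separate upper and lower bounds or address the multiplicity/disjointness convention you flag --- it simply asserts $\abs{\partial S} = \abs{S}\cdot\abs{\Hcal\setminus S}$ in one line --- so your version is, if anything, more careful about the same computation.
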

\begin{proof}
Recall that $\Hcal$ is the set of induced hypervertices. 
By definition \ref{def:set_expansion}, with any $S \subset \Hcal$ with $0 < \abs{S} \leq N/2$, we have 
\[
\phi_S 
= \frac{\abs{S} \cdot \abs{\Hcal \setminus S}}{\abs{S}}   
= \abs{\Hcal \setminus S}
\,.
\]
Taking a minimum as in definition \ref{def:hypergraph_expansion}, we obtain $\phi_\Gcal = \frac{N}{2}$.
\end{proof}

\begin{corollary}[Exact Inference in Complete Hypergraphs]
Let $\Gcal = (\Vcal, \Ecal)$ be a complete hypergraph. Assume $p,q < 0.5$. Then exact inference can be achieved using the proposed two-stage approach with probability tending to $1$ with a sufficiently large $n$.
\end{corollary}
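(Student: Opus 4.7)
The plan is to invoke Corollary \ref{corollary:two_stage}, which gives a lower bound on the exact inference probability of the form $1 - \epsilon_1(\phi_\Gcal, n, p) - \epsilon_2(n, q)$. The corollary then reduces to showing both $\epsilon_1 \to 0$ and $\epsilon_2 \to 0$ as $n \to \infty$ when the hypergraph is complete and $p,q < 0.5$. The second term is immediate: since $(1-2q)^2 > 0$ is a positive constant, $\epsilon_2(n,q) = \econst^{-(1-2q)^2 n/2}$ decays exponentially in $n$.

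For $\epsilon_1$, I would first substitute the structural quantities for a complete $m$-uniform hypergraph. The preceding proposition gives $\phi_\Gcal = N/2 = \binom{n}{m/2}/2$, and counting all $m$-tuples yields $\abs{\Ecal} = \binom{n}{m}$. Using the standard asymptotics $\binom{n}{k} = \Theta(n^k)$ for fixed $k$, this gives $\phi_\Gcal = \Theta(n^{m/2})$, $\phi_\Gcal^{2m} = \Theta(n^{m^2})$, and $\abs{\Ecal} = \Theta(n^m)$. In particular $\abs{\Ecal} \geq n^{m-1}$ eventually, so $\max(\abs{\Ecal}, n^{m-1}) = \Theta(n^m)$.

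Plugging these into the formula \eqref{eq:epsilon1}, the exponent inside the first term of $\epsilon_1$ becomes
\begin{equation*}
\frac{(1-2p)^2 \phi_\Gcal^{2m}}{8 n^m \cdot \max(\abs{\Ecal}, n^{m-1})} = \Theta\bigl(n^{m^2 - 2m}\bigr).
\end{equation*}
For every $m \in \{6, 10, 14, \dots\}$ covered by the paper, $m^2 - 2m > 0$, so the first term $2n^m \exp(-\Theta(n^{m^2-2m}))$ decays super-polynomially to $0$. The second term of $\epsilon_1$ simplifies to $\Theta(n^{m - m^2}) = \Theta(n^{-m(m-1)})$, which also tends to $0$ for $m \geq 2$. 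Combining these with the decay of $\epsilon_2$ and applying the union bound in \eqref{eq:combined_bound} yields exact inference probability tending to $1$.

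This is essentially a routine plug-and-verify calculation, so I do not anticipate a genuine obstacle; the only mild care needed is confirming $\max(\abs{\Ecal}, n^{m-1}) = \abs{\Ecal}$ for complete hypergraphs (immediate from $\binom{n}{m} \geq n^{m-1}$ for sufficiently large $n$) and tracking the exponents $m^2 - 2m$ and $m - m^2$ carefully enough to see both error contributions vanish simultaneously.
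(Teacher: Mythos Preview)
Your proposal is correct and follows essentially the same approach as the paper: substitute the complete-hypergraph Cheeger constant $\phi_\Gcal = N/2 = \Theta(n^{m/2})$ into the bound \eqref{eq:combined_bound} and verify that both $\epsilon_1$ and $\epsilon_2$ vanish. The paper's proof is terser---it records only the lower bound $\phi_\Gcal \geq \frac{2^{m/2-1}}{m^{m/2}} n^{m/2}$ and asserts the resulting rate is $1 - O(n^{-1})$---whereas you track the exponents $m^2 - 2m$ and $m - m^2$ explicitly, but the substance is identical.
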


\begin{proof}
Note that for complete hypergraphs, we have 
\[
\phi_\Gcal 
= \frac{N}{2} 
= \frac{1}{2}\binom{n}{m/2} 
\geq \frac{1}{2}\left(\frac{n}{m/2}\right)^{m/2} 
= \frac{2^{m/2-1}}{m^{m/2}} \cdot n^{m/2} 
\,.
\]
Substituting $\phi_\Gcal$ into \eqref{eq:combined_bound}, as long as $n$ is greater than some constant $c_0$, we have 
\[
1 -  \epsilon_1(\phi_\Gcal, n, p) - \epsilon_2(n, q) 
\geq 
1 - O(n^{-1})    
\,.
\]
\end{proof}
%%%%%

Next, we focus on the case of regular expanders.

\begin{definition}[$d$-regular Expander]
An $m$-uniform hypergraph $\Gcal = (\Vcal, \Ecal)$ is a $d$-regular expander hypergraph with constant $c > 0$, if for any induced hypervertex set $S \subset \Hcal$ with $0 < \abs{S} \leq N/2$, we have 
\[
\abs{\partial S} \geq c \cdot d \cdot \abs{S} \,.    
\]

\end{definition}

\begin{proposition}[Expander Hypergraphs]
For any $d$-regular expander hypergraph $\Gcal = (\Vcal, \Ecal)$ with constant $c>0$, we have 
\[
\phi_\Gcal
= c d
\,.   
\]   
\end{proposition}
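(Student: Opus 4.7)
The plan is to show that this proposition is essentially an immediate reformulation of the definition of a $d$-regular expander. The key observation is that the definition of a $d$-regular expander provides a per-set lower bound $\abs{\partial S} \geq c \cdot d \cdot \abs{S}$, while the Cheeger constant $\phi_\Gcal$ is exactly the minimum of $\abs{\partial S}/\abs{S}$ over all non-trivial induced hypervertex sets (Definition \ref{def:hypergraph_expansion}). So the two objects differ only by a division by $\abs{S}$ and a minimization.

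The first step is the lower bound. By the expansion assumption, for any $S \subset \Hcal$ with $0 < \abs{S} \leq N/2$, we have $\abs{\partial S}/\abs{S} \geq cd$, hence $\phi_S \geq cd$. Taking the minimum over all admissible $S$ yields $\phi_\Gcal \geq cd$. This step is completely mechanical and follows directly from Definition \ref{def:set_expansion} and Definition \ref{def:hypergraph_expansion}.

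The second step is the matching upper bound, which is the only non-trivial point. Here one has to interpret the constant $c$ in the expander definition as the \emph{largest} constant for which the expansion inequality holds across all admissible hypervertex sets; under that reading, maximality of $c$ forces tightness, i.e., for every $\varepsilon > 0$ there exists some $S$ with $\abs{\partial S} < (c+\varepsilon) d \abs{S}$, giving $\phi_\Gcal \leq (c+\varepsilon)d$ and hence $\phi_\Gcal \leq cd$ by taking $\varepsilon \to 0$. Combining the two bounds gives $\phi_\Gcal = cd$.

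The main (and really only) obstacle is this interpretive subtlety in the definition of expander: strictly as written, the definition allows any valid $c$, and the equality $\phi_\Gcal = cd$ would only be $\phi_\Gcal \geq cd$. I would therefore either add a one-line clarifying remark that $c$ denotes the tight expansion constant (in line with the standard convention for edge expanders), or weaken the statement of the proposition to an inequality. Beyond this clarification, no computation is needed — the proposition is a direct consequence of the definitions and does not rely on the Cheeger-type inequality (Theorem \ref{thm:cheeger_inequality}) or any tensor-spectral machinery.
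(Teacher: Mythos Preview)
Your proposal is correct and follows essentially the same approach as the paper: the paper also divides the expander inequality $\abs{\partial S} \geq cd\abs{S}$ by $\abs{S}$, takes the minimum over admissible $S$, and concludes $\phi_\Gcal = cd$. In fact, the paper's own proof only establishes the lower bound $\phi_\Gcal \geq cd$ before asserting equality, so your observation that the upper bound requires interpreting $c$ as the tight expansion constant is a valid point that the paper simply glosses over.
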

\begin{proof}
By definition \ref{def:set_expansion}, with any $S \subset \Hcal$ with $0 < \abs{S} \leq N/2$, we have 
\[
\phi_S 
\geq \frac{c \cdot d \cdot \abs{S}}{\abs{S}} 
= c d
\,.
\]
Taking a minimum as in definition \ref{def:hypergraph_expansion}, we obtain $\phi_\Gcal = c d$.
\end{proof}

\begin{corollary}[Exact Inference in Expander Hypergraphs]
Let $\Gcal = (\Vcal, \Ecal)$ be a $d$-regular expander hypergraph with constant $c>0$. Assume $p,q < 0.5$. Then exact inference can be achieved using the proposed two-stage approach with probability tending to $1$ with a sufficiently large $n$, if 
\[
d \in \Omega(n \cdot (\log n)^{1/2m}) \,.    
\]
\end{corollary}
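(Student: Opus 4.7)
The plan is to substitute the Cheeger lower bound $\phi_\Gcal = cd$ from the preceding proposition into the combined failure bound of Corollary \ref{corollary:two_stage}, and then verify that each summand of $\epsilon_1(\phi_\Gcal, n, p) + \epsilon_2(n,q)$ is $o(1)$ (in fact $O(n^{-1})$) under the hypothesis $d = \Omega(n (\log n)^{1/(2m)})$. A union bound over the decaying quantities will then yield a success probability tending to $1$ as $n \to \infty$.

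I would first dispose of the easy pieces. The second-stage term $\epsilon_2(n,q) = \econst^{-(1-2q)^2 n/2}$ decays exponentially in $n$ as soon as $q < 1/2$. For a $d$-regular $m$-uniform hypergraph, $|\Ecal| = nd/m$, and since $d \leq n$ we obtain $|\Ecal| \leq n^2/m \leq n^{m-1}$ for $m \geq 6$, which is the principal case of the paper; hence $\max(|\Ecal|, n^{m-1}) = n^{m-1}$. Consequently the second summand of $\epsilon_1$ becomes
\[
\frac{16(1-p) |\Ecal|}{(1-2p)^2 (cd)^{2m}} = \Theta\!\left( \frac{n}{d^{2m-1}} \right),
\]
which under the hypothesized lower bound on $d$ is at most $O(n^{-(2m-2)})$ and vanishes polynomially.

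The bulk of the work lies in bounding the exponential summand
\[
2 n^m \exp\!\left( - \frac{(1-2p)^2 c^{2m} d^{2m}}{8\, n^{2m-1}} \right).
\]
Taking logarithms, asking this to be $O(n^{-1})$ is equivalent to $d^{2m} \gtrsim n^{2m-1} \log n$, that is, $d \gtrsim n^{(2m-1)/(2m)} (\log n)^{1/(2m)}$. Since $n^{(2m-1)/(2m)} \leq n$, the hypothesized rate $d = \Omega(n (\log n)^{1/(2m)})$ is strictly stronger than this tight requirement and hence suffices to drive the exponential term below any inverse polynomial in $n$.

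The main obstacle I anticipate is purely bookkeeping: tracking the constants arising from $c$, $(1-2p)$, and $m$ to confirm that the hypothesized bound on $d$ dominates every term uniformly, and separately verifying that the two cases of the $\max$ behave as expected (clean for $m \geq 6$, with a small additional adjustment for the edge case $m=2$ where $|\Ecal|$ instead dominates $n^{m-1}$). Once that bookkeeping is complete, the three-way union bound produces a success probability of the form $1 - O(n^{-1})$, which tends to $1$ for $n$ sufficiently large.
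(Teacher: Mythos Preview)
Your overall strategy matches the paper's exactly: substitute $\phi_\Gcal = cd$ into the combined bound of Corollary~\ref{corollary:two_stage} and verify that each of the three summands is $O(n^{-1})$. The paper's argument is even more compressed than yours---it simply declares the exponential term dominant, bounds $|\Ecal| \in O(n^m)$, and reads off the sufficient condition on $d$.

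Two pieces of your bookkeeping deserve correction, however. First, the paper's definition of a ``$d$-regular expander hypergraph'' is purely an expansion condition on hypervertex sets; it never asserts that each \emph{vertex} lies in exactly $d$ hyperedges, so the identity $|\Ecal| = nd/m$ is not available. The paper instead uses the crude bound $|\Ecal| \in O(n^m)$, which makes $\max(|\Ecal|, n^{m-1}) = O(n^m)$ and produces a denominator of order $n^{2m}$ rather than your $n^{2m-1}$. This is exactly why the stated threshold is $d \in \Omega(n(\log n)^{1/(2m)})$ rather than the tighter $n^{(2m-1)/(2m)}(\log n)^{1/(2m)}$ you derive. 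Second, your claim ``since $d \leq n$'' is actually inconsistent with the hypothesis: $d = \Omega(n(\log n)^{1/(2m)})$ forces $d > n$ for all sufficiently large $n$. Neither slip is fatal to the conclusion---you yourself note that the hypothesized rate dominates your tighter requirement, and the second summand of $\epsilon_1$ is still $o(1)$ under the crude $|\Ecal| = O(n^m)$ bound---but both intermediate steps lean on facts the paper does not supply.
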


\begin{proof}
First note that the exponential term in \eqref{eq:epsilon1} is the dominating factor. Substituting $\phi_\Gcal = cd$ into the first term of \eqref{eq:epsilon1}, we want to ensure
\[
2 n^m \exp\left(- \frac{(1-2p)^2 (cd)^{2m}}{8n^m \cdot \max(\abs{\Ecal}, n^{m-1})}\right) 
\leq 
c_0 n^{-1}
\,,
\]
for some constant $c_0 > 0$. Note that $\abs{\Ecal} \in O(n^m)$. 
As a result, a sufficient condition for $d$ is $d \in \Omega(n \cdot (\log n)^{1/2m})$.
Substituting $d$ into the other term also fulfills the $O(n^{-1})$ bound.
\end{proof}

%%%%%%%%%%%%%%%%%%%%%%%%%%%

\subsection{Simulation Results}
We test the proposed method on a hypergraph with order $m=6$; see Figure \ref{figs:simulation}. We fix the number of nodes $n = 10$. We focus on the Stage One, and check how many labels can be recovered up to permutation of the signs. 
We implement a tensor projected gradient descent solver motivated by \citet{ke2022exact,han2013unconstrained}. 
For each setting we run $20$ iterations.
Our results suggest that if the noise level $p$ is small and the hypergraph Cheeger constant $\phi_\Gcal$ is large, the proposed algorithm performs well and recovers the underlying group structure.
This matches our theoretic findings in Theorem \ref{thm:stage1}.

\begin{figure}[h!]
\centering
\includegraphics[width=0.35\linewidth]{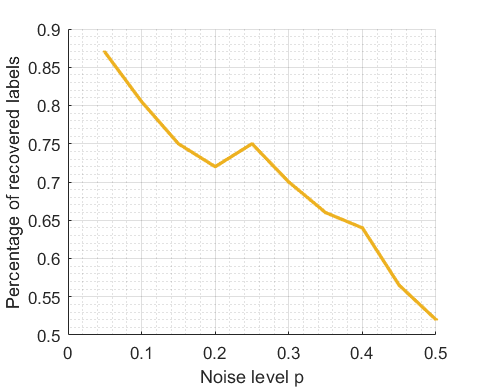}
\includegraphics[width=0.35\linewidth]{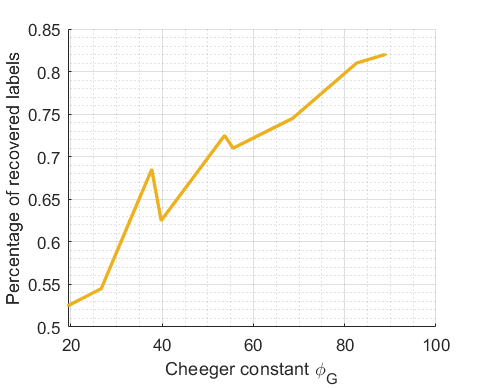}
\caption{Simulations with different noise levels $p$ (top) and Cheeger constant $\phi_G$ (down). Our results suggest that if the noise level $p$ is small and the hypergraph Cheeger constant $\phi_\Gcal$ is large, the proposed algorithm performs well and recovers the underlying group structure.}
\label{figs:simulation}
\end{figure}

\bibliography{0_main.bib}
\bibliographystyle{plainnat}

%%%%%%%%%%%%%%%%%%%%%%%%%%%%%%%%%%%%%%%%%%%%%%%%%%%%%%%%%%%%
 
\newpage
\appendix

\section{Proofs of Hypergraph Structural Properties and Cheeger-type Inequality}
\label{appendix:proof_lemma}
For clarity of presentation, in the following proofs, we use 
\[
RN_L(v) := \sum_{(i_1,\dots,i_m)\in \Ecal} \zeta(v_{i_1},\dots,v_{i_m})
\]
to denote the numerator part of the Rayleigh quotient of Laplacian $L$, and 
\[
RD_L(v) := \norm{v}^m
\]
to denote the denominator part.

\begin{proof}[Proof of Lemma \ref{lemma:rq_lowerbound}]
Regarding the numerator, we have  
\begin{align*}
RN_L(v \circ y + \delta \onevct)
&= \sum_{(i_1,\dots,i_m)\in \Ecal} \left[( v_{i_1} y_{i_1} + \delta + \dots +  v_{i_{m/2}} y_{i_{m/2}} + \delta -  v_{i_{m/2+1}} y_{i_{m/2+1}} - \delta - \dots -  v_{i_m} y_{i_m} - \delta)^m + \dots \right] \\
&= \sum_{(i_1,\dots,i_m)\in \Ecal} \left[( v_{i_1} y_{i_1} + \dots +  v_{i_{m/2}} y_{i_{m/2}}  -  v_{i_{m/2+1}} y_{i_{m/2+1}}  - \dots -  v_{i_m} y_{i_m} )^m + \dots \right] \\
% &\leq \sum_{(i_1,\dots,i_m)\in \Ecal} \frac{ \left[( v_{i_1} +  v_{i_2} + \dots +  v_{i_{m/2}} -  v_{i_{m/2+1}} - \dots -  v_{i_m})^m + \dots \right] }{ \norm{v}^m} \\
&= RN_{L_y}(v)
\,.
\end{align*}
Regarding the denominator, we have 
\begin{align*}
RD_L(v \circ y + \delta \onevct) 
&= \norm{v \circ y + \delta \onevct}^m \\
&= \left(\sum_i (v_i y_i + \delta)^2\right)^{m/2} \\
&= \left(\sum_i v_i^2 y_i^2 + n \delta^2 + 2\delta \sum_i v_i y_i \right)^{m/2} \\
&= \left(\sum_i v_i^2 + n\delta^2\right)^{m/2} \\
&\geq \left(\sum_i v_i^2\right)^{m/2} \\
&= \norm{v}^m \\
&= RD_{L_y}(v)
\,.
\end{align*}
\end{proof}

\begin{proof}[Proof of Lemma \ref{lemma:exist_D_tensor}]
Our goal is to construct a tensor $D$ fulfilling $\inprod[D-A]{v^\om} = \inprod[L]{v^\om}$ for any vector $v \in \R^n$.
In other words, we require 
\[
\sum_{i_1,\dots,i_m} D_{i_1,\dots,i_m} v_{i_1} \dots v_{i_m} - \sum_{i_1,\dots,i_m} A_{i_1,\dots,i_m} v_{i_1} \dots v_{i_m} = \sum_{i_1,\dots,i_m} L_{i_1,\dots,i_m} v_{i_1} \dots v_{i_m}
\,.
\]
A key observation is that on the left-hand side, $D$ and $A$ control different entries: $D_{i_1,\dots,i_m}$ is equal to $0$ in those entries without repeating indices, while $A_{i_1,\dots,i_m}$ is equal to $0$ in those entries with repeating indices. Recall that the union $\sgmm \cup \sgmc$ contains all entry indices without repeating indices.
We can rewrite the equation above as 
\[
\sum_{(i_1,\dots,i_m) \in \sgmm \cup \sgmc} D_{i_1,\dots,i_m} v_{i_1} \dots v_{i_m} - \sum_{(i_1,\dots,i_m) \notin \sgmm \cup \sgmc} A_{i_1,\dots,i_m} v_{i_1} \dots v_{i_m} = \sum_{i_1,\dots,i_m} L_{i_1,\dots,i_m} v_{i_1} \dots v_{i_m}
\,.
\]
Recall that 
\begin{align}
\inprod[L]{v^{\om}} 
&= \frac{1}{m! \binom{m}{m/2}} \sum_{(i_1,\dots,i_m)\in \Ecal} \zeta(v_{i_1},\dots,v_{i_m}) \nonumber \\
&= \frac{1}{m! \binom{m}{m/2}} \sum_{(i_1,\dots,i_m)\in \Ecal} \left[(v_{i_1} + v_{i_2} + \dots + v_{i_{m/2}} - v_{i_{m/2+1}} - \dots - v_{i_m})^m + \dots\right] 
\,.
\end{align}
The next observation is that, by expanding the $m$-power terms in the bracket, for each summand $(i_1,\dots,i_m)$ we will obtain a sequence of monomials consisting of $v_{i_1}$ through $v_{i_m}$. For example, this includes $C_0 \cdot v_{i_1} v_{i_2} \dots v_{i_{m-1}}, v_{i_m}$, $C_1 \cdot v_{i_1}^2 v_{i_2} \dots v_{i_{m-1}}$, etc., where $C$'s are coefficients. Note that all these monomials have an order of $m$.

We now analyze the monomial terms above by grouping the monomials with the same power pattern. 
Formally, we group all terms in the shape of $C \cdot v_{i_1}^{d_1} v_{i_2}^{d_2} \dots v_{i_q}^{d_q}$ together, where $d_1 \geq d_2 \geq \dots \geq d_q \geq 1$, $d_1 + \dots + d_q = m$. All permutation of subscripts are enumerated in the parenthesis.
We use $Q$ to denote the number of terms inside this group. 
Here we provide some examples of monomial groups:
\begin{itemize}
  \item $C\cdot(v_{i_1}^m + v_{i_2}^m + \dots + v_{i_m}^m)$, with pattern $d_1 = m$. In this case $Q = m$.
  \item $m = 6, C\cdot (v_{i_1}^5 v_{i_2}^1 + v_{i_1}^5 v_{i_3}^1 + \dots + v_{i_6}^5 v_{i_5}^1)$, with pattern $d_1 = 5$, $d_2 = 1$. In this case $Q = 6\cdot 5 = 30$.
  \item $m = 6, C\cdot (v_{i_1}^3 v_{i_2}^2 v_{i_3}^1 + v_{i_1}^3 v_{i_2}^2 v_{i_4}^1 + \dots + v_{i_6}^3 v_{i_5}^2 v_{i_4}^1)$, with pattern $d_1 = 3$, $d_2 = 2$, $d_3 = 1$. In this case $Q = 6\cdot 5\cdot 4 = 120$.
\end{itemize}

Next we discuss the power pattern in each group. 

If the power pattern is $d_1 = \dots = d_m = 1$ (i.e., all $1$-power components), we know the coefficient $C$ is equal to $-\frac{1}{m! \binom{m}{m/2}}$ by counting. Thus, these terms with all $1$-power components are already balanced by $- \sum_{(i_1,\dots,i_m) \notin \sgmm \cup \sgmc} A_{i_1,\dots,i_m} v_{i_1} \dots v_{i_m}$ on the left-hand side.
This also shows the necessity of introducing the factor $\frac{1}{m! \binom{m}{m/2}}$ in the definition of hypergraph Laplacians: it ensures that the term $v_{i_1} v_{i_2} \dots v_{i_m}$ in the expanded form of $\inprod[L]{v^\om}$ has a coefficient of $1$.

For groups with other power patterns ($d_1 \geq \dots d_q \geq 1$, $q < m$), we balance them by setting $D$ entries with indices in the permutation of $(i_1,\dots, i_1, i_2,\dots,i_2, \dotsb\dotsb, i_q,\dots,i_q)$, in which $i_1$ repeats $d_1$ times, $i_2$ repeats $d_2$ times, etc.
We set the value of these $D$ entries to: $CQ\cdot \sum_{i_{q+1},\dots,i_m} A_{i_1,\dots,i_m} = CQ\cdot \sum_{i_{q+1},\dots,i_m} \onefun[(i_1,\dots,i_m)\in \Ecal]$. In other words, for $D$ entries containing index $i_1$ through $i_q$, we set them to be equal to those $A$ entries containing index $\{i_1,\dots,i_m\} \setminus \{i_1,\dots,i_q\}$.
This can be illustrated using the same examples above:
\begin{itemize}
  \item Set $D_{i_1,\dots,i_1} = CQ\sum_{i_2,\dots,i_m} A_{i_1,\dots,i_m}$.
  \item Set $D_{i_1,i_1,i_1,i_1,i_1,i_2} = CQ\sum_{i_3,i_4,i_5,i_6} A_{i_1,\dots,i_m}$, and the same for all symmetric entries in $\sigma(i_1,i_1,i_1,i_1,i_1,i_2)$.
  \item Set $D_{i_1,i_1,i_1,i_2,i_2,i_3} = CQ\sum_{i_4,i_5,i_6} A_{i_1,\dots,i_m}$, and the same for all symmetric entries in $\sigma(i_1,i_1,i_1,i_1,i_1,i_2)$. 
\end{itemize}
After the procedure is done, we have $\inprod[L]{v^\om} = \inprod[D-A]{v^\om}$, and $D_{i_1,i_2,\dots,i_m} = 0$ if $i_1,i_2,\dots,i_m$ are all different.
\end{proof}

%%%%%%%%%%%%%%%%%%%

% Necessary conditions for Laplacian: 1) number of negative terms is odd (to cancel $\inprod[-W]{v^\om}$); 2) number of negative terms is equal to number of positive terms (to fulfill Lemma 2). 
% Thus if $m = 2, 6, 10, 14, \dots$, we can define 
% \begin{align*}
% R_L(v) 
% &:= \frac{1}{\norm{v}^m} \sum_{(i_1,\dots,i_m)\in \Ecal} \zeta(v_{i_1},\dots,v_{i_m}) \\
% &= \frac{1}{\norm{v}^m} \sum_{(i_1,\dots,i_m)\in \Ecal} \left[(v_{i_1} + v_{i_2} + \dots + v_{i_{m/2}} - v_{i_{m/2+1}} - \dots - v_{i_m})^m + \dots \right] 
% \,,
% \end{align*}
% where $\zeta(v_{i_1},\dots,v_{i_m})$ is the summation of $\binom{m}{m/2}$ terms with power of $m$.

% Observation:
% \begin{itemize}
%   \item $\inprod[-W]{v^\om}$ is canceled in the Laplacian definition.
%   \item Lemma 1 and 2 are fulfilled.
%   \item $\min_v R_L(v) = R_L(\onevct) = 0$, and this is unique up to scaling. 
% \end{itemize}

% \mynote{TODO: move?}

%%%%%%%%%%%%%%%%%%%

\begin{proof}[Proof of Theorem \ref{thm:cheeger_inequality}]
Suppose $v$ is the eigenvector associated with the second smallest eigenvalue $\lsec(L)$.
By Lemma \ref{lemma:invariant_scaling}, we assume $\norm{v} = 1$ without loss of generality.
We also sort the entries of $v$, i.e., $v_1 \leq \dots \leq v_n$.

Our first step is to set up a helper vector $u$ based on the second smallest eigenvector $v$.
Recall that $\Hcal$ is the set of induced hypervertices (see Definition \ref{def:hypervertices}). 
For any hypervertex $h_j = \{i_1,\dots,i_{m/2}\} \in \Hcal$, based on the second eigenvector $v$, we define the $v$-value of $h_j$ by 
\[
v(h_j) := \frac{1}{m/2} \left(v_{i_1} + \dots + v_{i_{m/2}}\right) 
\,.  
\]
This allows us to sort all the hypervertices by their $v$-value, such that the hypervertex indices fulfill
\[
v(h_1) \leq v(h_2) \leq \dots \leq v(h_N) 
\,,
\]
and recall that $N := \abs{\Hcal}$.  
We use $M$ to denote the smallest integer fulfilling 
$M \geq \frac{1}{2}\abs{\Hcal}$,
and we introduce a shifting operation by defining 
$u := v - v(h_M) \cdot \onevct$.
We further find a constant $c_u > 0$, and we scale $u$ by multiplying $u$ with $c_u$ so that it fulfills 
$(u_{1} + \dots + u_{{m/2}})^2 + (u_{n-m/2+1} + \dots + u_{n})^2 
= 
1$.
Note that $u$ fulfills the following properties:
\begin{itemize}
  \item $(u_{1} + \dots + u_{{m/2}})^2 + (u_{n-m/2+1} + \dots + u_{n})^2 = 1$.
  \item $u_1 \leq \dots \leq u_n$.
  \item $u_M = 0$.
  \item $R_L(v) \geq R_L(u)$, by Lemma \ref{lemma:invariant_scaling} and \ref{lemma:invariant_shifting}.
\end{itemize}

The second step of our proof is to construct a random set of hypervertices $S_t$.
To do so, we define $t$ to be a random variable on the support $[u_{1} + \dots + u_{{m/2}}, u_{n-m/2+1} + \dots + u_{n}]$,
with probability density function $f(t) = 2\abs{t}$.
It can be verified that $t$ is a valid random variable, because 
\[
\int_{u_{1} + \dots + u_{{m/2}}}^{u_{n-m/2+1} + \dots + u_{n}} 2\abs{t} 
= (u_{1} + \dots + u_{{m/2}})^2 + (u_{n-m/2+1} + \dots + u_{n})^2
= 1
\,.
\]
Based on $t$, we can construct a random set of hypervertices $S_t$ as follows 
\[
S_t := \lbrace h_j \mid h_j = \{i_1,\dots,i_{m/2}\}, u_{i_1} + \dots + u_{i_{m/2}} \leq t \rbrace \,.  
\]
Here we consider the size of $S_t$ in the average case. We have 
\[
\Expect[t]{\abs{S_t}}  
= \sum_j \Prob[t]{u_{h_j} \leq t}
\,,
\]
and 
\[
\Expect[t]{\abs{\Hcal \setminus S_t}}  
= \sum_j \Prob[t]{u_{h_j} > t}
\,.
\]
Combining the two leads to
\begin{align*}
\Expect[t]{\min(\abs{S_t}, \abs{\Hcal\setminus S_t})}
&= \sum_j \Prob[t]{u_{h_j} \leq t < 0} + \sum_j \Prob[t]{u_{h_j} > t \geq 0 } \\
&= \sum_j u_{h_j}^2
\,.
\end{align*}
We also consider the boundary set $\partial S_t$. By Definition \ref{def:boundary_set}, a hyperedge $e = h_1 \cup h_2$ belongs to $\partial S_t$, if $h_1 \in S_t$, $h_2\notin S_t$, and $h_1\cap h_2 = \emptyset$.
Define the shorthand notation $u_h := \sum_{i\in h} u_i$, and assume $u_{h_1} \leq u_{h_2}$.
Then 
\[
\Prob{e\in \partial S_t} 
=
\Prob{u_{h_1} \leq t \leq u_{h_2}}
\leq 
\abs{u_{h_1} - u_{h_2}} (\abs{u_{h_1}} + \abs{u_{h_2}})
\,.
\]

Finally, we analyze the expectation of $\abs{\partial S_t}$. It follows that 
\begin{align*}
\Expect[t]{\abs{\partial S_t}}
&= \sum_{e\in \Ecal, e = h_1 \cup h_2} \Expect[t]{\onefun[e\in \partial S_t]} \\
&= \sum_{e\in \Ecal, e = h_1 \cup h_2} \Prob[t]{e\in \partial S_t} \\
&\leq \sum_{e\in \Ecal, e = h_1 \cup h_2} \abs{u_{h_1} - u_{h_2}} (\abs{u_{h_1}} + \abs{u_{h_2}}) \\
&\overset{\text{(a)}}{\leq} \left(\sum_{e\in \Ecal, e = h_1 \cup h_2} (u_{h_1} - u_{h_2})^m \right)^{\frac{1}{m}} \left(\sum_{e\in \Ecal, e = h_1 \cup h_2} (\abs{u_{h_1}} + \abs{u_{h_2}})^{\frac{m}{m-1}}\right)^{\frac{m-1}{m}} \\
&= RN^{\frac{1}{m}} \cdot \left(\sum_{e\in \Ecal, e = h_1 \cup h_2} (\abs{u_{h_1}} + \abs{u_{h_2}})^{\frac{m}{m-1}}\right)^{\frac{m-1}{m}} \\
&= R_L(u)^{\frac{1}{m}} \cdot \norm{u} \cdot \left(\sum_{e\in \Ecal, e = h_1 \cup h_2} (\abs{u_{h_1}} + \abs{u_{h_2}})^{\frac{m}{m-1}}\right)^{\frac{m-1}{m}} \\
&\leq R_L(u)^{\frac{1}{m}} \cdot \norm{u} \cdot \left(\sum_{e\in \Ecal, e = h_1 \cup h_2} (\abs{u_{h_1}} + \abs{u_{h_2}})^{2}\right)^{\frac{m-1}{m}} \\
% &\overset{\text{(b)}}{\leq} R_L(u)^{\frac{1}{m}} \cdot \norm{u} \cdot \Expect[t]{\min(\abs{S_t}, \abs{\Hcal\setminus S_t})}^{\frac{m-1}{m}} \\
&\leq R_L(u)^{\frac{1}{m}} \cdot \Expect[t]{\min(\abs{S_t}, \abs{\Hcal\setminus S_t})}^{\frac{m-1}{m}}  \\
&\leq R_L(u)^{\frac{1}{m}} \cdot \Expect[t]{\min(\abs{S_t}, \abs{\Hcal\setminus S_t})} 
\,,
\end{align*}
where (a) follows from Holder's inequality.
Rearranging the terms above leads to 
\[
\Expect[t]{R_L(u)^{\frac{1}{m}} \cdot \min(\abs{S_t}, \abs{\Hcal\setminus S_t}) - \abs{\partial S_t}} \geq 0 \,.  
\]
Thus there exists some $t$ fulfilling 
\[
R_L(u)^{\frac{1}{m}} \cdot \min(\abs{S_t}, \abs{\Hcal\setminus S_t}) - \abs{\partial S_t}
\geq 
0
\,,  
\]
or equivalently,
\[
R_L(u) \geq \left(\frac{\abs{\partial S_t}}{\min(\abs{S_t}, \abs{V\setminus S_t})} \right)^m
\,.
\]
Plugging in $R_L(v) \geq R_L(u)$ on the left-hand side, and Definition \ref{def:hypergraph_expansion} on the right-hand side, we obtain 
\[
\lsec(L) 
= R_L(v)
\geq R_L(u) \geq \left(\frac{\abs{\partial S_t}}{\min(\abs{S_t}, \abs{V\setminus S_t})} \right)^m
\geq \phi_\Gcal^m 
\,.
\]
\end{proof}
\section{Proof of Exact Recovery of True Labels}
\label{appendix:proof_recovery}
\begin{proof}[Proof of Theorem \ref{thm:stage1}]
Our goal is to recover the true labels $Y^\ast := (\pm \yast)^{\om}$ (up to flipping signs) using \eqref{eq:opt_primal}. For readers' convenience, here we restate the formulation:
\begin{align*}
\maximize_{Y} \qquad  &\inprod[X]{Y} \nonumber\\
\st \qquad
&Y \in \vpsd \nonumber \\
& -1 \leq Y_\sgmc \leq 1 \nonumber \\
&Y_\sgmm = 1 \,.
\end{align*}
% The Lagrangian of \eqref{eq:opt_primal} is
% \begin{align*}
% \quad L(Y,M,A,V)
% = \inprod[-X-A]{Y} 
% + \sum_{\sgmm} V_{\sgmm} (Y_{\sgmm} - 1) 
% + \sum_{\sgmc} (V_{\sgmc}^+ - V_{\sgmc}^-) Y_{\sgmc}
% - (V_{\sgmc}^+ + V_{\sgmc}^-) \,.
% \end{align*}
% Setting gradients to be $0$ leads to ($E$ is the tensor with $1$ in the entry, and $0$ everywhere else):
% \begin{align*}
% \nabla_Y L &= -X-A + \sum_{\sgmm} V_{\sgmm} \cdot E_{\sgmm} + \sum_{\sgmc} (V_{\sgmc}^+ - V_{\sgmc}^-) \cdot E_{\sgmc} = 0 
% \,.
% \end{align*}
The Lagrangian dual problem of \eqref{eq:opt_primal} is
\begin{align*}
\minimize_{V, V^+, V^-, A} \qquad & \sum_{(i_1,\dots,i_m) \in \sgmm} V_{i_1,\dots,i_m}  + \sum_{(i_1,\dots,i_m) \in \sgmc} (V_{i_1,\dots,i_m}^+ + V_{i_1,\dots,i_m}^-) \\
\st \qquad
& V_{i_1,\dots,i_m} = 0 \,, \text{ if $i_1,\dots,i_m$ are all different}   \\
& V_{\sgmc} = V_{\sgmc}^+ - V_{\sgmc}^-  \\
& V_{\sgmc}^+ \geq 0 \\
& V_{\sgmc}^- \geq 0  \\
& -X-A + \sum_{(i_1,\dots,i_m) \in \sgmm} V_{i_1,\dots,i_m} \cdot E_{i_1,\dots,i_m} + \sum_{(i_1,\dots,i_m) \in \sgmc} (V_{i_1,\dots,i_m}^+ - V_{i_1,\dots,i_m}^-) \cdot E_{i_1,\dots,i_m} = 0\\
& A \in \cpsd 
\,,
\end{align*}
where $E_{i_1,\dots,i_m}$ is a tensor with $1$ in entry $(i_1,\dots,i_m)$, and $0$ everywhere else.
From the primal and dual problems, we obtain the following KKT conditions:
\begin{align*}
V - X  - A &= 0  
\tag{Stationarity}  \\
Y_{\sgmm} &= 1 \\
Y_{\sgmc} &\leq 1 \\
-Y_{\sgmc} &\leq 1 \\
Y &\in \vpsd 
\tag{Primal Feasibility}  \\
V_{i_1,\dots,i_m} &= 0 \\
V_{\sgmc} &= V_{\sgmc}^+ - V_{\sgmc}^- \\
V_{\sgmc}^+ &\geq 0 \\
V_{\sgmc}^- &\geq 0 \\ 
A &\in \cpsd 
\tag{Dual Feasibility}  \\
V_{\sgmc}^+ (Y_{\sgmc}-1) &= 0 \\
V_{\sgmc}^- (Y_{\sgmc}+1) &= 0 \\
\inprod[A]{Y} &= 0
\,.
\tag{Complementary Slackness} 
\end{align*}

Here we construct primal and dual variables to fulfill all KKT conditions above. 
For the primal variable we set $Y = Y^\ast$. 
For the dual variable we define $V = D_{\yast}$, where $D_{\yast}$ is the high-order degree tensor constructed from the signed Laplacian $L_\yast$, with the procedure defined in Lemma \ref{lemma:exist_D_tensor}. 
We then define $V_{\sgmm} = (D_{\yast})_\sgmm$, $V_{\sgmc}^+ = \max((D_{\yast})_\sgmc, 0)$, $V_{\sgmc}^- = \min((D_{\yast})_\sgmc, 0)$.
From the stationarity condition, we set $A = L_\yast = D_\yast - X$. 

At this point, our construction of primal and dual variables have fulfilled every KKT conditions above except the positive semidefinite condition $A = L_\yast \in \cpsd$. 
From Lemma \ref{lemma:laplacian_eigenpair}, we know $\yast$ is an eigenvector of $L_\yast$ with an eigenvalue of $0$, or equivalently, $\inprod[L_\yast]{Y^\ast} = 0$. It remains to ensure for all orthogonal vectors, we have 
$
\min_{u\perp \yast} R_{L_\yast} (u)
\geq 
0$.  
On top of that, we want to ensure the solution $Y = Y^\ast$ is unique. This further requires that 
\[
\min_{u\perp \yast} R_{L_\yast} (u)
=
\min_{u\perp \yast} \frac{\inprod[D_\yast - X]{u^\om}}{\norm{u}^m} 
>
0
\,.
\]

Without loss of generality, we fix $\norm{u} = 1$ in the following discussion. We split the terms above into
\begin{align}
\min_{u\perp \yast, \norm{u} = 1} \inprod[D_\yast - X]{u^\om}
&\geq \min_{u\perp \yast, \norm{u} = 1} \inprod[D_\yast - \Expect{D_\yast}]{u^\om} \label{eq:D_concentrate}\\
&\quad + \min_{u\perp \yast, \norm{u} = 1} \inprod[\Expect{X} - X]{u^\om} \label{eq:X_concentrate}\\
&\quad + \min_{u\perp \yast, \norm{u} = 1} \inprod[\Expect{D_\yast - X}]{u^\om} \label{eq:expectation_concentrate}
\,.
\end{align}

First we bound the expectation term \eqref{eq:expectation_concentrate}.
Note that 
\begin{align*}
% \Expect[z]{R_{L_\yast}(u)} 
\min_{u\perp \yast, \norm{u} = 1} \inprod[\Expect{D_\yast - X}]{u^\om}
&= \min_{u\perp \yast, \norm{u} = 1} \inprod[\Expect{L_\yast}]{u^\om} \\
&= (1-2p) \cdot \min_{u\perp \yast, \norm{u} = 1} \sum_{(i_1,\dots,i_m)\in \Ecal} \zeta(u_{i_1} y_{i_1}^\ast,\dots,u_{i_m} y_{i_m}^\ast) \\
&= (1-2p) \cdot R_{L_\yast} (u) \\
&\geq (1-2p) \cdot R_{L} (u \circ \yast + \delta \onevct) \\
&\geq (1-2p) \cdot \phi_\Gcal^m \,,
\end{align*}
where the first inequality follows from Lemma \ref{lemma:rq_lowerbound}, and the second inequality follows from Theorem \ref{thm:cheeger_inequality}.

Next we bound \eqref{eq:X_concentrate} using concentration inequalities.
We have 
\begin{align*}
\Prob{- \lmax(\Expect{X} - X) \leq -t}
&\leq \Prob{- \normf{\Expect{X} - X}^2 \leq -t^2}  \\
&\leq \frac{1}{t^2} \cdot \Expect{\normf{\Expect{X} - X}^2} \\
&= \frac{1}{t^2} \cdot 4p(1-p)\abs{\Ecal}
\,.
\end{align*}
Setting $t = \frac{1-2p}{2} \phi_\Gcal^m$ leads to 
\begin{align}
\Prob{- \lmax(\Expect{X} - X) \leq - \frac{1-2p}{2} \phi_\Gcal^m}
&\leq \frac{16p(1-p)\abs{\Ecal}}{(1-2p)^2 \phi_\Gcal^{2m}} 
\,.
\label{eq:X_rate}
\end{align}

Finally we bound \eqref{eq:D_concentrate}.
Using Cauchy-Schwarz inequality, we obtain 
\begin{align*}
- \abs{\inprod[\Expect{D_\yast} - D_\yast]{u^\om}} 
&\geq - \norm{\vect{\Expect{D_\yast} - D_\yast}}  \cdot \norm{\vect{u^\om}} \\
&= - \norm{\vect{\Expect{D_\yast} - D_\yast}} \cdot \norm{u}^m\\
&= - \norm{\vect{\Expect{D_\yast} - D_\yast}} \\
&\geq - n^{m/2} \cdot \norminf{\vect{\Expect{D_\yast} - D_\yast}} 
\,,
\end{align*}
where $\vect{\cdot}$ is the vectorization operator.
We then use Hoeffding's inequality for every entry. By the construction procedure defined in the proof of Lemma \ref{lemma:exist_D_tensor}, every entry of $D_\yast$ is the summation of at most $\max(\abs{\Ecal},n^{m-1})$ Rademacher random variables. We obtain
\begin{align*}
\Prob{{\vect{\Expect{D_{\yast,i_1,\dots,i_m}} - D_{\yast,i_1,\dots,i_m}}} \geq t}
&\leq 2\exp\left(-\frac{2t^2}{2^2 \max(\abs{\Ecal},n^{m-1})}\right) \\
&\leq 2\exp\left(-\frac{t^2}{2\max(\abs{\Ecal},n^{m-1})}\right) 
\,.
\end{align*}
By a union bound, we obtain
\begin{align*}
\Prob{\norminf{\vect{\Expect{D_\yast} - D_\yast}} \geq t}
&\leq 2\exp\left(-\frac{2t^2}{2^2 \max(\abs{\Ecal},n^{m-1})}\right) \\
&\leq 2n^m \exp\left(-\frac{t^2}{2\max(\abs{\Ecal},n^{m-1})}\right) 
\,.
\end{align*}
Setting $t = \frac{(1-2p)\phi_\Gcal^m}{2n^{m/2}}$  leads to 
\begin{align}
\Prob{- \abs{\inprod[\Expect{D_\yast} - D_\yast]{u^\om}}  \leq - \frac{1-2p}{2} \phi_\Gcal^m}
&\leq 2n^m \exp\left(-\frac{(1-2p)^2 \phi_\Gcal^{2m}}{8n^m \max(\abs{\Ecal},n^{m-1})}\right) 
\,.
\label{eq:D_rate}
\end{align}

Combining the results of \eqref{eq:X_rate} and \eqref{eq:D_rate} with a union bound completes the proof.
\end{proof}

%%%%%

\end{document}